\newcommand{\X}{\mathcal{X}}
\newcommand{\Y}{\mathcal{Y}}
\newcommand{\Z}{\mathcal{Z}}
\newcommand{\W}{\mathcal{W}}
\newcommand{\C}{\mathcal{C}}
\newcommand{\x}{\mathbf{x}}
\newcommand{\z}{\mathbf{z}}
\newcommand{\xspi}{\mathbf{x_S^{(i)}}}
\newcommand{\xtpi}{\mathbf{x_T^{(i)}}}
\newcommand{\xt}{\mathbf{x_T}}
\newcommand{\xsB}{\mathbf{x_S^B}}
\newcommand{\xtB}{\mathbf{x_T^B}}
\newcommand{\zsB}{\mathbf{z_S^B}}
\newcommand{\ztB}{\mathbf{z_T^B}}
\newcommand{\znB}{\mathbf{z_{N}^B}}
\newcommand{\znpi}{\mathbf{z_{N}^{(i)}}}
\newcommand{\zspi}{\mathbf{z_S^{(i)}}}
\newcommand{\ztpi}{\mathbf{z_T^{(i)}}}
\newcommand{\ztpj}{\mathbf{z_T^{(j)}}}
\newcommand{\zB}{\mathbf{z^B}}
\newcommand{\w}{\mathbf{w}}
\newcommand{\y}{\mathbf{y}}
\newcommand{\pTY}{\bm{p}_{T}^Y}
\newcommand{\pTYh}{\hat{\bm{p}}_{T}^Y}
\newcommand{\pNY}{\bm{p}_{N}^Y}
\newcommand{\pSY}{\bm{p}_{S}^Y}
\def\Assref#1{Assumption~\ref{#1}}
\def\Defref#1{Definition~\ref{#1}}
\def\Propref#1{Proposition~\ref{#1}}
\def\Appref#1{Appendix~\ref{#1}}
\def\Tabref#1{Table~\ref{#1}}
\def\Figref#1{Figure~\ref{#1}}
\def\Secref#1{Section~\ref{#1}}
\def\Algref#1{Algorithm~\ref{#1}}
\def\1{\bm{1}}
\DeclareMathAlphabet{\mathsfit}{\encodingdefault}{\sfdefault}{m}{sl}
\SetMathAlphabet{\mathsfit}{bold}{\encodingdefault}{\sfdefault}{bx}{n}
\DeclareMathOperator*{\argmin}{arg\,min}
\definecolor{ForestGreen}{RGB}{34,139,34}
\definecolor{Amaranth}{rgb}{0.9, 0.17, 0.31}
\definecolor{Airforceblue}{rgb}{0.36, 0.54, 0.66}
\definecolor{amethyst}{rgb}{0.6, 0.4, 0.8}
\definecolor{lightblue}{rgb}{0.68, 0.85, 0.9}
\newtheorem{theorem}{Theorem}
\newtheorem*{theorem*}{Theorem}
\newtheorem{assumption}{Assumption}
\newtheorem*{lemma*}{Lemma}
\newtheorem{proposition}{Proposition}
\newtheorem*{proposition*}{Proposition}
\newtheorem{definition}{Definition}
\title{Mapping conditional distributions for domain adaptation under generalized target shift}
\author{Matthieu Kirchmeyer$\textsuperscript{1,2}$, Alain Rakotomamonjy$\textsuperscript{2,3}$, Emmanuel de Bézenac$\textsuperscript{1}$, Patrick Gallinari$\textsuperscript{1,2}$ \\ 
$\textsuperscript{1}$CNRS-ISIR, Sorbonne University; $\textsuperscript{2}$Criteo AI Lab; $\textsuperscript{3}$Université de Rouen, LITIS\\
}
\newcommand{\mytag}[2]{%
  \text{#1}%
  \@bsphack
  \begingroup
    \@onelevel@sanitize\@currentlabelname
    \edef\@currentlabelname{%
      \expandafter\strip@period\@currentlabelname\relax.\relax\@@@%
    }%
    \protected@write\@auxout{}{%
      \string\newlabel{#2}{%
        {#1}%
        {\thepage}%
        {\@currentlabelname}%
        {\@currentHref}{}%
      }%
    }%
  \endgroup
  \@esphack
}
\begin{document}

\maketitle

\begin{abstract}
We consider the problem of unsupervised domain adaptation (UDA) between a source and a target domain under conditional and label shift a.k.a Generalized Target Shift (\texttt{GeTarS}). Unlike simpler UDA settings, few works have addressed this challenging problem. Recent approaches learn domain-invariant representations, yet they have practical limitations and rely on strong assumptions that may not hold in practice. In this paper, we explore a novel and general approach to align pretrained representations, which circumvents existing drawbacks. Instead of constraining representation invariance, it learns an optimal transport map, implemented as a NN, which maps source representations onto target ones. Our approach is flexible and scalable, it preserves the problem's structure and it has strong theoretical guarantees under mild assumptions. In particular, our solution is unique, matches conditional distributions across domains, recovers target proportions and explicitly controls the target generalization risk. Through an exhaustive comparison on several datasets, we challenge the state-of-the-art in \texttt{GeTarS}.
\end{abstract} 
\section{Introduction}
Unsupervised Domain Adaptation (UDA) methods \citep{Pan2010} train a classifier with labelled samples from a source domain $S$ such that its risk on an unlabelled target domain $T$ is low. 
This problem is ill-posed and simplifying assumptions were considered. 
Initial contributions focused on three settings which decompose differently the joint distribution over input and label $X \times Y$: \textbf{covariate shift} (\texttt{CovS}) \citep{Shimodaira2000} with $p_{S}(Y | X)=p_{T}(Y | X)$, $p_{S}(X) \neq p_{T}(X)$, \textbf{target shift} (\texttt{TarS}) \citep{Zhang2013} with $p_{S}(Y) \neq p_{T}(Y)$, $p_{S}(X|Y)=p_{T}(X|Y)$ and \textbf{conditional shift} \citep{Zhang2013} with $p_{S}(X|Y) \neq p_{T}(X|Y), p_{S}(Y) = p_{T}(Y)$. 
These assumptions are restrictive for real-world applications and were extended into \textbf{model shift} when $p_{S}(Y|X) \neq p_{T}(Y | X)$, $p_{S}(X) \neq p_{T}(X)$ \citep{Wang2014,Wang2015} and \textbf{generalized target shift} (\texttt{GeTarS}) \citep{Zhang2013} when $p_{S}(X|Y) \neq p_{T}(X | Y), p_{S}(Y) \neq p_{T}(Y)$.
We consider \texttt{GeTarS} where a key challenge is to map the source domain onto the target one to minimize both conditional and label shifts, without using target labels. 
The current SOTA in \cite{Gong2016,Combes2020,Rakotomamonjy2020,Shui2021} learns domain-invariant representations and uses estimated class-ratios between domains as importance weights in the training loss. 
However, this approach has several limitations. 
First, it updates representations through adversarial alignment which is prone to well-known instabilities, especially on applications where there is no established Deep Learning architectures e.g. click-through-rate prediction, spam filtering etc. in contrast to vision. 
Second, to transfer representations, the domain-invariance constraint breaks the original problem structure and it was shown that this may degrade the discriminativity of target representations \citep{Liu2019}. 
Existing approaches that consider this issue \citep{Xiao2019,Li2020,Chen2019} were not applied to \texttt{GeTarS}.
Finally, generalization guarantees are derived under strong assumptions, detailed in \Secref{sec:ass}, which may not hold in practice. 

In this paper, we address these limitations with a new general approach, named Optimal Sample Transformation and Reweight (\texttt{OSTAR}), which maps pretrained representations using Optimal Transport (OT). \texttt{OSTAR} proposes an alternative to constraining representation invariance and performs jointly three operations: given a pretrained encoder, (i) it learns an OT map, implemented as a neural network (NN), between encoded source and target conditionals, (ii) it estimates target proportions for sample reweighting and (iii) it learns a classifier for the target domain using source labels. \texttt{OSTAR} has several benefits: (i) it is flexible, scalable and preserves target discriminativity and (ii) it provides strong theoretical guarantees under mild assumptions. In summary, our contributions are:
\begin{itemize}
    \item We propose an approach, \texttt{OSTAR}, to align pretrained representations under \texttt{GeTarS}. Without constraining representation-invariance, \texttt{OSTAR} jointly learns a classifier for inference on the target domain and an OT map, which maps representations of source conditionals to those of target ones under class-reweighting. \texttt{OSTAR} preserves target discriminativity and experimentally challenges the state-of-the-art for \texttt{GeTarS}.
    \item \texttt{OSTAR} implements its OT map as a NN shared across classes. Our approach is thus flexible and has native regularization biases for stability. Moreover it is scalable and generalizes beyond training samples unlike standard linear programming based OT approaches.
    \item \texttt{OSTAR} has strong theoretical guarantees under mild assumptions: its solution is unique, recovers target proportions and correctly matches source and target conditionals at the optimum. It also explicitly controls the target risk with a new Wasserstein-based bound.
\end{itemize}

Our paper is organized as follows. In \Secref{sec:proposed_app}, we define our problem, approach and assumptions. In \Secref{sec:theoretical_getars}, we derive theoretical results. In \Secref{sec:model_algo}, we describe our implementation. We report in \Secref{sec:experiments} experimental results and ablation studies. In \Secref{sec:related_work}, we present related work. 

\section{Proposed approach}
\label{sec:proposed_app}

In this section, we successively define our problem, present our method, \texttt{OSTAR} and its main ideas and introduce our assumptions, used to provide theoretical guarantees for our method.

\subsection{Problem definition}
\label{sec:pb_def}
Denoting $\X$ the input space and $\Y = \{1,\ldots,K\}$ the label space, we consider UDA between a source $S=(\X_S, \Y_S, p_S(X,Y))$ with labeled samples $\widehat{S}=\{(\x_S^{(i)}, y_S^{(i)})\}_{i=1}^n\in(\X_S \times \Y_S)^n$ and a target $T=(\X_T, \Y_T, p_T(X,Y))$ with unlabeled samples $\widehat{T}=\{\x_T^{(i)}\}_{i=1}^m\in\X_T^m$.
We denote $\Z \subset \mathbb{R}^d$ a latent space and $g:\X \rightarrow \Z$ an encoder from $\X$ to $\Z$. $\Z_S$ and $\Z_T$ are the encoded source and target input domains, $\Z_{\widehat{S}}$ and $\Z_{\widehat{T}}$ the corresponding training sets and $Z$ a random variable in this space. The latent marginal probability induced by $g$ on $D \in \{S, T\}$ is defined as $\forall A \subset \Z, p_D^g(A) \triangleq g_{\#}(p_D(A))$\footnote{\label{note1}$f_{\#} \rho$ is the push-forward measure $f_{\#} \rho(B)=\rho\left(f^{-1}(B)\right)$, for all measurable set $B$.}. For convenience, $\bm{p}_D^Y \in \mathbb{R}^K$ denotes the label marginal $p_{D}(Y)$ and $p_D(Z)\triangleq p_D^g(Z)$. 
In all generality, latent conditional distributions and label marginals differ across domains; this is the \texttt{GeTarS} assumption \citep{Zhang2013} made in feature space $\Z$ rather than input space $\X$, as in \Defref{definition:GeTarS}.
\texttt{GeTarS} is illustrated in \Figref{fig:getars} and states that representations from a given class are different across domains with different label proportions. 
Operating in the latent space has several practical advantages e.g. improved discriminativity and dimension reduction. 
\begin{definition}[\texttt{GeTarS}]
    \texttt{GeTarS} is characterized by conditional mismatch across domains i.e. $\exists j \in \{1, \hdots, K\}, p_{S}(Z | Y=j) \neq p_{T}(Z | Y=j)$ and label shift i.e. $\pSY \neq \pTY$. 
    \label{definition:GeTarS}
\end{definition}
Our goal is to learn a classifier in $\Z$ with low target risk, using source labels. This is challenging as (i) target labels are unknown and (ii) there are two shifts to handle. We will show that this can be achieved with pretrained representations if we recover two key properties: (i) a map which matches source and target conditional distributions and (ii) target proportions to reweight samples by class-ratios and thus account for label shift. Our approach, \texttt{OSTAR}, achieves this objective. 

\subsection{Mapping conditional distributions under label shift} 
We now present the components in \texttt{OSTAR}, their objective and the various training steps. 
\vspace{-1em}
\paragraph{Components}
The main components of \texttt{OSTAR}, illustrated in \Figref{fig:star}, are detailed below. These components are learned and estimated using the algorithm detailed in \Secref{sec:model_algo}. They include:
\vspace{-0.5em}
\begin{itemize}
    \item a fixed encoder $g:\X \rightarrow \Z$, defined in \Secref{sec:pb_def}.
    \item a mapping $\phi: \mathcal{Z}\rightarrow \mathcal{Z}$, acting on source samples encoded by $g$.
    \item a label proportion vector $\pNY$ on the simplex $\Delta_K$.
    \item a classifier $f_N:\Z \rightarrow \{1, \ldots, K\}$ for the target domain in a hypothesis class $\mathcal{H}$ over $\Z$.
\end{itemize}
\paragraph{Objective}
$g$ encodes source and target samples in a latent space such that it preserves rich information about the target task and such that the risk on the source domain is small. $g$ is fixed throughout training to preserve target discriminativity. $\phi$ should map encoded source conditionals in $\Z_S$ onto corresponding encoded target ones in $\Z_T$ to account for conditional shift; $\Z_N$ denotes the mapped space. $\pNY$ should estimate the target proportions $\pTY$ to account for label shift. Components $(\phi, \pNY)$ define a new labelled domain in latent space $N=(\Z_N, \Y_N, p_N(Z,Y))$ through a Sample Transformation And Reweight operation of the encoded $S$ domain, as illustrated in \Figref{fig:star}. Indeed, the pushforward by $\phi$ of encoded source conditionals defines conditionals in domain $N$, $p_{N}^{\phi}(Z|Y)$:
\begin{align*}
    \forall k,~p_{N}^{\phi}(Z|Y=k) \triangleq \phi_{\#}\Big(p_S(Z|Y=k)\Big)\footref{note1}
\end{align*}
Then, $\pNY$ weights each conditional in $N$. This yields a marginal distribution in $N$, $p_{N}^{\phi}(Z)$:  
\begin{equation}
    p_{N}^{\phi}(Z)\triangleq \sum_{k=1}^{K} \bm{p}_{N}^{Y=k} p_{N}^{\phi}(Z|Y=k) \label{eq:pnz}
\end{equation}
Finally, classifier $f_N$ is trained on labelled samples from domain $N$. This is possible as each sample in $N$ is a projection of a labelled sample from $S$. $f_N$ can then be used for inference on $T$. We will show that it has low target risk when components $\phi$ and $\pNY$ achieve their objectives detailed above.
\begin{figure}[t]
    \centering
    \subfloat[UDA under \texttt{GeTarS} in latent space $\Z$\label{fig:getars}]{\includegraphics[width=0.47\textwidth]{./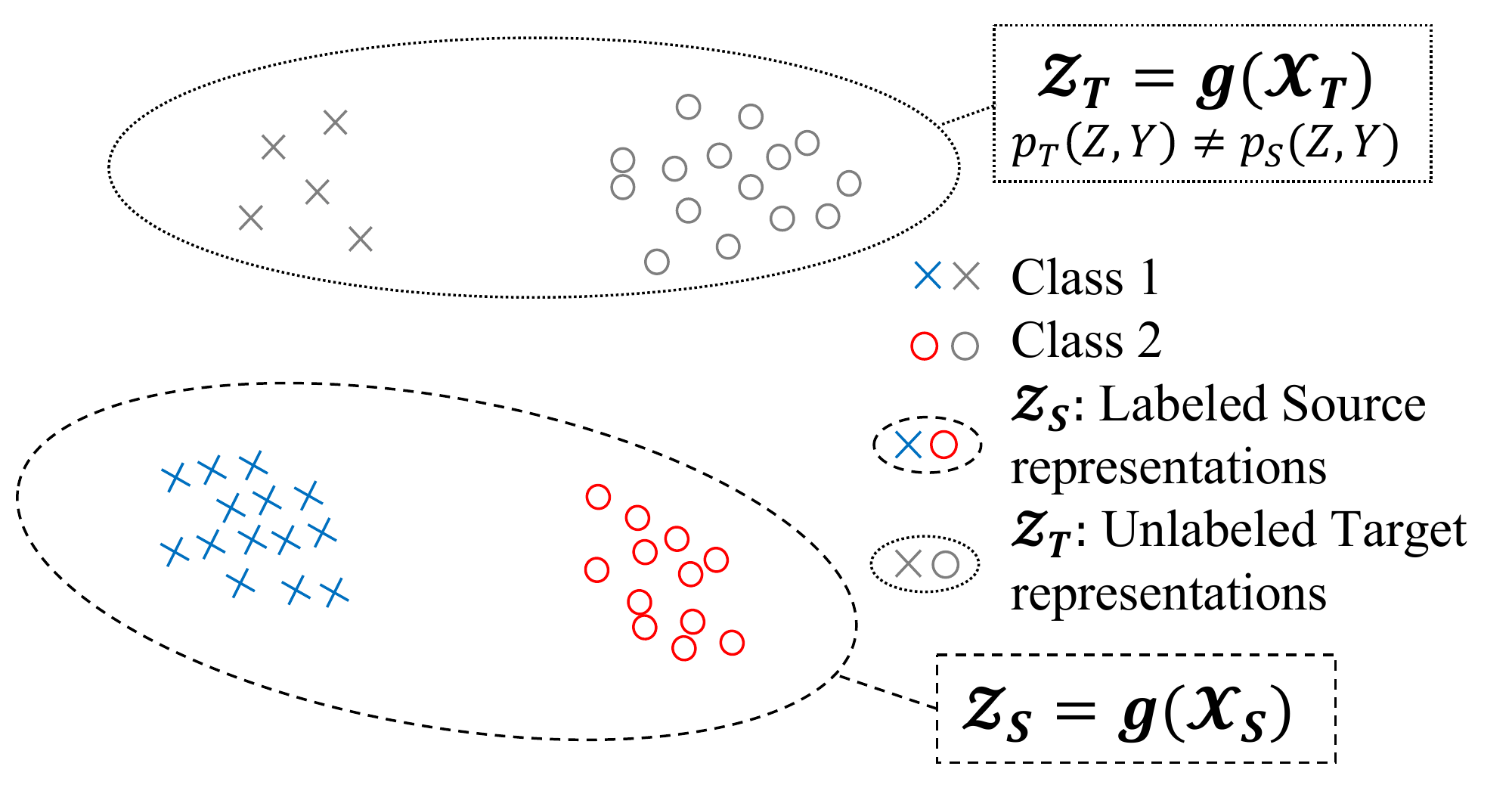}}
    \subfloat[Mapping \& reweighting source samples with \texttt{OSTAR} \label{fig:star}]{\includegraphics[width=0.53\textwidth]{./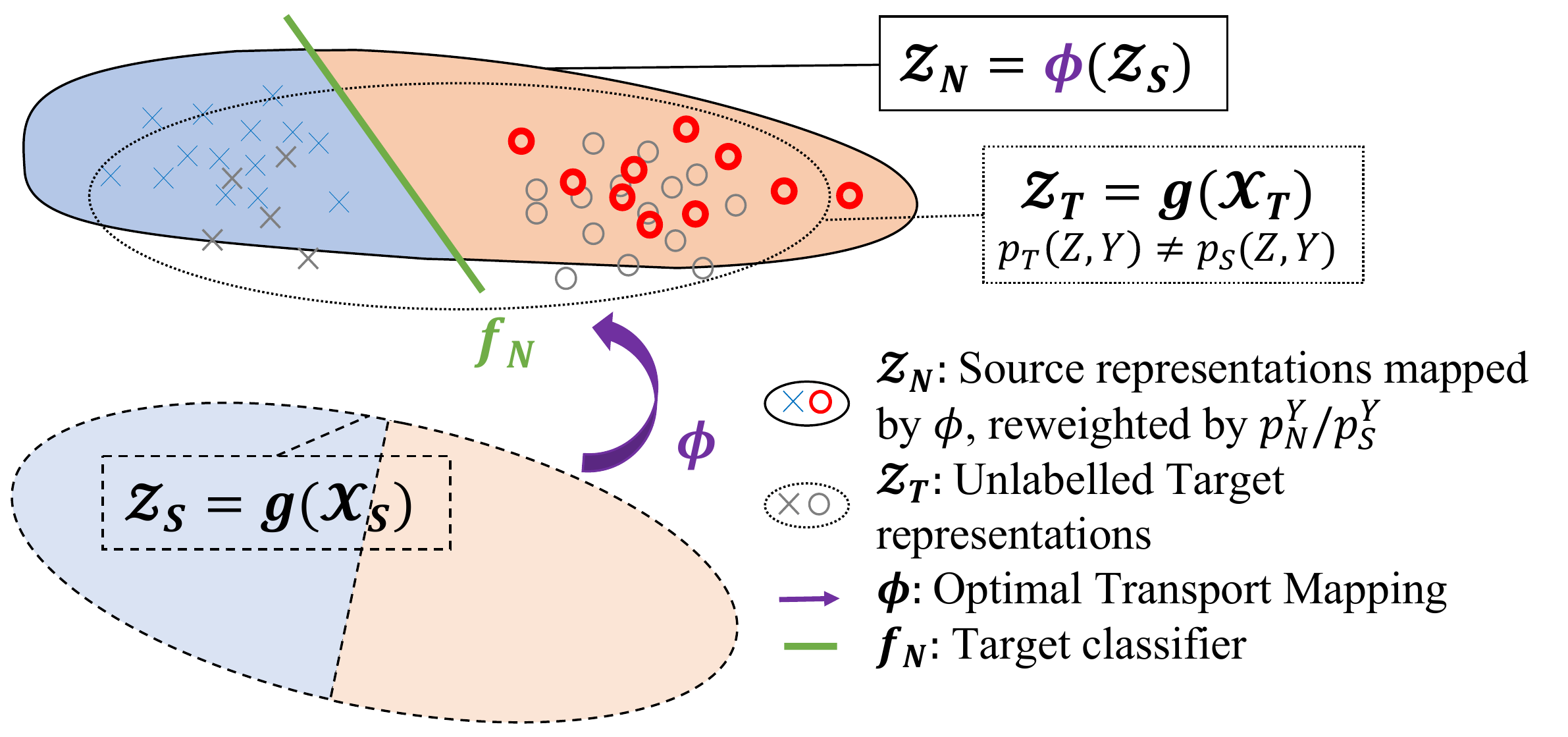}}
    \caption{\textbf{Illustration of our approach on a 2-class UDA problem} [better viewed in color]. (a) A pretrained encoder $g$ defines a latent space $\Z$ with labelled source samples ${\color{blue} \times}{\color{red} \circ}$ and unlabelled target samples ${\color{gray} \times \circ}$ under conditional and label shift (\texttt{GeTarS}). (b) We train a target classifier ${\color{ForestGreen} f_N}$ on a new domain $N$, where labelled samples ${\color{lightblue} \times}{\color{red} \bm{\circ}}$ are obtained by (i) mapping source samples with ${\color{amethyst} \phi}$ acting on conditional distributions and (ii) reweighting these samples by estimated class-ratios $\pNY/\pSY$. ${\color{amethyst} \phi}$ should match source and target conditionals and $\pNY$ should estimate target proportions $\pTY$.} \label{fig:nf_fig}
    \vspace{-1em}
\end{figure}
\paragraph{Training}
We train \texttt{OSTAR}'s components in two stages. First, we train $g$ along a source classifier $f_S$ from scratch by minimizing source classification loss; alternatively, $g$ can be tailored to specific problems with pretraining. Second, we jointly learn $(f_N, \phi, \pNY)$ to minimize a classification loss in domain $N$ and to match target conditionals and proportions with those in domain $N$. As target conditionals and proportions are unknown, we propose a proxy problem for $(\phi, \pNY)$ to match instead latent marginals $p_T(Z)$ and $p_N^{\phi}(Z)$ \eqref{eq:pnz}. We solve this proxy problem under least action principle measured by a Monge transport cost \citep{Santambrogio2015}, denoted $\C(\phi)$, as in problem \eqref{eq:ot_problem}:
\begin{equation}
    \begin{aligned}
        &\min_{\phi, \pNY \in \Delta_K} \C(\phi) \triangleq \sum_{k=1}^{K} \int_{\z \in \Z} \|\phi(\z) - \z\|_2^2 ~p_{S}(\z|Y=k) d\z \\
        &\text{subject to~} p_N^{\phi}(Z)=p_T(Z)
    \end{aligned} \tag{OT} \label{eq:ot_problem}
\end{equation}
For any function $\phi$ e.g. parametrized by a NN, $\C(\phi)$ is the transport cost of encoded source conditionals by $\phi$. It uses a cost function, $c(\x, \y) = \|\x - \y\|_2^p$, where without loss of generality $p=2$. The optimal $\C(\phi)$ is the sum of Wasserstein-2 distances between source conditionals and their mappings. Problem \eqref{eq:ot_problem} seeks to minimize $\C(\phi)$ under marginal matching. We provide some further background on optimal transport in \Appref{app:background} and discuss there the differences between our OT problem \eqref{eq:ot_problem} and the standard Monge OT problem between $p_S(Z)$ and $p_T(Z)$.

Our OT formulation is key to the approach. First, it is at the basis of our theoretical analysis. Under \Assref{assumption:class_cond_T} later defined, the optimal transport cost is the sum of the Wasserstein-2 distance between source and matched target conditionals. We can then provide conditions on these distances in \Assref{assumption:cyclical_monotonicity} to formally define when the solution to problem \eqref{eq:ot_problem} correctly matches conditionals. Second, it allows us to learn the OT map with a NN. This NN approach: (i) generalizes beyond training samples and scales up with the number of samples unlike linear programming based OT approaches \citep{Courty2015} and (ii) introduces useful stability biases which make learning less prone to numerical instabilities as highlighted in \cite{bezenac2020,Karkar2020}.

\subsection{Assumptions}
\label{sec:ass}
In \Secref{sec:theoretical_getars} we will introduce the theoretical properties of our method which offers several guarantees. As always, this requires some assumptions which are introduced below. We discuss their relations with assumptions used in related work and detail why they are less restrictive. We provide some additional discussion on their motivation and validity in \Appref{app:limitations}.
\begin{assumption}[Cluster Assumption on S]
   $\forall k, \bm{p}_S^{Y=k} > 0$ and there is a partition of the source domain $\Z_S$ such that $\Z_S=\cup_{k=1}^K \Z_S^{(k)}$ and $\forall k~p_S(Z \in \Z_S^{(k)}|Y=k)=1$. \label{assumption:cluster_assumption}
\end{assumption}\vspace{-0.5em}
\Assref{assumption:cluster_assumption}, inspired from \cite{Chapelle2010}, states that source representations with the same label are within the same cluster. It helps guarantee that only one map is required to match source and target conditionals. \Assref{assumption:cluster_assumption} is for instance satisfied when the classification loss on the source domain is zero which corresponds to the training criterion of our encoder $g$. Interestingly other approaches e.g. \cite{Combes2020,Rakotomamonjy2020} assume that it also holds for target representations; this is harder to induce as target labels are unknown.
\begin{assumption}[Conditional matching]
    A mapping $\phi$ solution to our matching problem in problem \eqref{eq:ot_problem} maps a source conditional to a target one i.e. $\forall k~\exists j~\phi_{\#}(p_{S}(Z|Y=k))=p_T(Z|Y=j)$.
    \label{assumption:class_cond_T}
\end{assumption}\vspace{-0.5em}
\Assref{assumption:class_cond_T} guarantees that mass of a source conditional will be entirely transferred to a target conditional; $\phi$ solution to problem \eqref{eq:ot_problem} will then perform optimal assignment between conditionals. It is less restrictive than alternatives: the \textit{ACons} assumption in \cite{Zhang2013,Gong2016} states the existence of a map matching the right conditional pairs i.e. $j=k$ in \Assref{assumption:class_cond_T}, while the \textit{GLS} assumption of \cite{Combes2020} imposes that $p_{S}(Z|Y)=p_T(Z|Y)$. \textit{GLS} is thus included in \Assref{assumption:class_cond_T} when $j=k$ and $\phi=\mathrm{Id}$ and is more restrictive.
\begin{assumption}[Cyclical monotonicity between S and T]
    For all $K$ elements permutation $\sigma$, conditional probabilities in the source and target domains satisfy $\sum_{k=1}^{K} \W_2(p_S(Z|Y=k), p_T(Z|Y=k)) \leq \sum_{k=1}^{K} \W_2(p_S(Z|Y=k), p_T(Z|Y=\sigma(k))$ with $\W_2$, the Wasserstein-2 distance.
    \label{assumption:cyclical_monotonicity}
\end{assumption}\vspace{-0.5em}
\Assref{assumption:cyclical_monotonicity}, introduced in \cite{Rakotomamonjy2020}, formally defines settings where conditionals are guaranteed to be correctly matched with an optimal assignment in the latent space under \Assref{assumption:class_cond_T}. One sufficient condition for yielding this assumption is when $\forall k, j,~ \W_2(p_S(Z|Y=k), p_T(Z|Y=k)) \leq \W_2(p_S(Z|Y=k), p_T(Z|Y=j))$. This last condition is typically achieved when conditionals between source and target of the same class are ``sufficiently near'' to each other.
\begin{assumption}[Conditional linear independence on T]
    $\{p_T(Z|Y=k)\}_{k=1}^K$ are linearly independent implying $\forall k, \not\exists\alpha\in\left\{\Delta_{K} |~ p_T(Z|Y=k)=\sum_{j=1, j \neq k}^K \alpha_{j} p_T(Z|Y=j)\right\}$.
    \label{assumption:linear_indep}
\end{assumption}\vspace{-0.5em}
\Assref{assumption:linear_indep} is standard and seen in \texttt{TarS} to guarantee correctly estimating target proportions \citep{Redko2019,Garg2020}. It discards pathological cases like when $\exists (i,j,k)\, \exists(a,b) ~s.t.~ a+b =1, p_T(Z|Y=i)= a \times p_T(Z|Y=j)+ b \times p_T(Z|Y=k)$. It is milder than its alternative \textit{A2Cons} in \cite{Zhang2013,Gong2016} which states linear independence of linear combinations of source and target conditionals, in $\X$ respectively $\Z$.

\section{Theoretical results}
\label{sec:theoretical_getars}
We present our theoretical results, with proofs in \Appref{app:proof}, for \texttt{OSTAR} under our mild assumptions in \Secref{sec:ass}. We first analyze in \Secref{sec:optimum} the properties of the solution to our problem in \eqref{eq:ot_problem}. Then, we show in \Secref{sec:bound} that given the learned components $g$, $\phi$ and $\pNY$, the target generalization error of a classifier in the hypothesis space $\mathcal{H}$ can be upper-bounded by different terms including its risk on domain $N$ and the Wasserstein-1 distance between marginals in $N$ and $T$.

\subsection{Properties of the solution to the OT alignment problem}
\label{sec:optimum}
\begin{proposition}[Unicity and match]
    For any encoder $g$ which defines $\Z$ satisfying \Assref{assumption:cluster_assumption}, \ref{assumption:class_cond_T}, \ref{assumption:cyclical_monotonicity}, \ref{assumption:linear_indep}, there is a unique solution $(\phi, \pNY)$ to \eqref{eq:ot_problem} and $\phi_{\#}(p_S(Z|Y))=p_{T}(Z | Y)$ and $\pNY = \pTY$.
    \label{proposition:optimality}
\end{proposition}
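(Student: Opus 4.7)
The plan is to decompose any candidate solution $(\phi, \pNY)$ into a combinatorial piece (an assignment of source labels to target labels) and a continuous piece (the transport map on each cluster), then pin each down by a different assumption. By Assumption~\ref{assumption:class_cond_T}, every feasible $\phi$ determines a map $\sigma: \{1,\ldots,K\}\to\{1,\ldots,K\}$ with $\phi_\#(p_S(Z|Y=k)) = p_T(Z|Y=\sigma(k))$. I would (i) use the marginal matching constraint plus Assumption~\ref{assumption:linear_indep} to pin $\pNY$ down in terms of $\sigma$ and $\pTY$ and show that $\sigma$ must be a bijection, (ii) use Assumption~\ref{assumption:cyclical_monotonicity} to show the optimum forces $\sigma = \mathrm{Id}$, and (iii) use Assumption~\ref{assumption:cluster_assumption} together with Brenier's theorem to get uniqueness of $\phi$.

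For (i), substituting $p_N^{\phi}(Z|Y=k) = p_T(Z|Y=\sigma(k))$ into the marginal constraint $p_N^{\phi}(Z) = p_T(Z)$ and regrouping by target class yields
\begin{equation*}
\sum_{j=1}^K \pTY_j\, p_T(Z|Y=j) \;=\; \sum_{j=1}^K \Big(\sum_{k:\sigma(k)=j} \pNY_k\Big)\, p_T(Z|Y=j).
\end{equation*}
Assumption~\ref{assumption:linear_indep} then forces the coefficients to agree: $\pTY_j = \sum_{k:\sigma(k)=j} \pNY_k$ for all $j$. Since target proportions are strictly positive on every class (implicit in the setup), every $j$ has a preimage, so $\sigma$ is surjective on the finite set $\{1,\ldots,K\}$ and therefore a permutation.

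For (ii), Assumption~\ref{assumption:cluster_assumption} gives disjoint source supports $\Z_S^{(k)}$, so the cost splits as $\C(\phi) = \sum_k \int_{\Z_S^{(k)}} \|\phi(z)-z\|_2^2\, p_S(z|Y=k)\, dz$, and each term is bounded below by $\W_2^2(p_S(Z|Y=k), p_T(Z|Y=\sigma(k)))$, with equality iff $\phi|_{\Z_S^{(k)}}$ is the squared-Euclidean Brenier map onto $p_T(Z|Y=\sigma(k))$. Minimizing this bound over permutations $\sigma$, Assumption~\ref{assumption:cyclical_monotonicity} (invoked via its pairwise sufficient form, which makes the $\W_2$ and $\W_2^2$ versions coincide) singles out $\sigma = \mathrm{Id}$. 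For (iii), with $\sigma = \mathrm{Id}$, the relation from (i) immediately gives $\pNY = \pTY$, while Brenier's theorem applied on each cluster yields the optimal $\phi$ uniquely, $p_S$-almost everywhere, under the tacit regularity that source conditionals are absolutely continuous on $\Z$.

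The main obstacle is the combinatorial step (i): forbidding a non-surjective $\sigma$ is precisely what requires Assumption~\ref{assumption:linear_indep}, together with the implicit hypothesis $\pTY > 0$ componentwise; without this, multiple $\sigma$'s could satisfy the coefficient equation and uniqueness of $\pNY$ would fail. A secondary technicality is that Assumption~\ref{assumption:cyclical_monotonicity} is stated with $\W_2$ while our cost is naturally $\W_2^2$; this is reconciled by using the pairwise sufficient condition explicitly highlighted in the paper, under which both versions of cyclical monotonicity hold simultaneously.
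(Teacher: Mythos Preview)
Your proposal is correct and follows the same route as the paper: reduce to an assignment problem between conditionals via Assumption~\ref{assumption:class_cond_T}, force the identity assignment via Assumption~\ref{assumption:cyclical_monotonicity}, then appeal to Brenier clusterwise (using Assumption~\ref{assumption:cluster_assumption}) for $\phi$ and to Assumption~\ref{assumption:linear_indep} for $\pNY$. You are, if anything, more careful than the paper on two points it glosses over---explicitly ruling out non-bijective $\sigma$ via linear independence and strict positivity of $\pTY$, and flagging the $\W_2$ versus $\W_2^2$ mismatch between Assumption~\ref{assumption:cyclical_monotonicity} and the cost $\C(\phi)$.
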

Given an encoder $g$ satisfying our assumptions, \Propref{proposition:optimality} shows two strong results. First, the solution to problem \eqref{eq:ot_problem} exists and is unique. Second, we prove that this solution defines a domain $N$, via the sample transformation and reweight operation  defined in \eqref{eq:pnz}, where encoded conditionals and label proportions are equal to target ones. For comparison, \cite{Combes2020,Shui2021} recover target proportions only under \textit{GLS} i.e. when conditionals are already matched, while we match both conditional and label proportions under the more general \texttt{GeTarS}.

\subsection{Controlled target generalization risk}
\label{sec:bound}
We now characterize the generalization properties of a classifier $f_N$ trained on this domain $N$ with a new general upper-bound. First, we introduce some notations; given an encoder $g$ onto a latent space $\Z$, we define the risk of a classifier $f$ as $\epsilon_D^g(f)\triangleq \mathbb{E}_{\z \sim p_D(Z,Y)}[f(\z) \neq y]$ with $D \in \{S,T,N\}$.
\begin{theorem}[Target risk upper-bound] \label{th:uda_bounds}
    Given a fixed encoder $g$ defining a latent space $\Z$, two domains $N$ and $T$ satisfying cyclical monotonicity in $\Z$, assuming that we have $\forall k, \bm{p}_{N}^{Y=k} > 0$, then $\forall f_N \in \mathcal{H}$ where $\mathcal{H}$ is a set of $M$-Lipschitz continuous functions over $\Z$, we have
    \begin{gather}
        \epsilon_{T}^g(f_N) \leq \underbrace{\epsilon_N^g(f_N)}_{\mathrm{Classification~} \mytag{(C)}{term_C}} + \dfrac{2 M}{\min_{k=1}^K \bm{p}_{N}^{Y=k}} \underbrace{\W_1\Big(p_{N}(Z), p_{T}(Z)\Big)}_{\mathrm{Alignment~} \mytag{(A)}{term_A}} + \notag\\ 
        2 M (1 + \dfrac{1}{\min_{k=1}^K \bm{p}_{N}^{Y=k}}) \underbrace{\W_1\Big(\sum_{k=1}^K \bm{p}_{T}^{Y=k} p_T(Z|Y=k), \sum_{k=1}^K \bm{p}_{N}^{Y=k} p_T(Z|Y=k)\Big)}_{\mathrm{Label~} \mytag{(L)}{term_L}} \label{eq:bounds}
    \end{gather}
\end{theorem}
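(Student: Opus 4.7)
The plan is to introduce a hybrid latent measure that retains $T$'s class conditionals but is re-weighted by $N$'s label proportions, then bound the excess target risk via the triangle inequality and Kantorovich--Rubinstein duality.

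Concretely, set $\bar p(Z) := \sum_{k=1}^K \bm{p}_N^{Y=k}\, p_T(Z|Y=k)$ and the associated hybrid risk $\bar\epsilon^g(f_N) := \sum_k \bm{p}_N^{Y=k}\,\E_{p_T(Z|Y=k)}[\ell_k(f_N)]$, where $\ell_k(\z) = \1[f_N(\z)\neq k]$ inherits $M$-Lipschitz continuity from the hypothesis on $\mathcal H$. The triangle inequality then gives
\[
\epsilon_T^g(f_N)\le \epsilon_N^g(f_N) + \underbrace{|\epsilon_N^g(f_N)-\bar\epsilon^g(f_N)|}_{(\mathrm I)} + \underbrace{|\bar\epsilon^g(f_N)-\epsilon_T^g(f_N)|}_{(\mathrm{II})}.
\]

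For (I), the two domains share the label marginal $\bm{p}_N^Y$, so the gap splits class-wise and KR-duality bounds it by $M\sum_k \bm{p}_N^{Y=k}\,\W_1(p_N(Z|Y=k),p_T(Z|Y=k))$. Here cyclical monotonicity between $N$ and $T$ plays its role: optimality of the identity permutation lets one build a class-respecting transport plan on the marginals, so combined with $\bm{p}_N^{Y=k}\ge \min_{k'}\bm{p}_N^{Y=k'}$ one obtains $(\mathrm I)\le \tfrac{M}{\min_k \bm{p}_N^{Y=k}}\,\W_1(p_N(Z),\bar p(Z))$. A final $\W_1$ triangle inequality, $\W_1(p_N(Z),\bar p(Z))\le \W_1(p_N(Z),p_T(Z))+\W_1(p_T(Z),\bar p(Z))$, separates this into the alignment (A) and label (L) terms. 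For (II), $\bar p$ and $p_T$ share the conditionals $p_T(Z|Y)$ and differ only through mixture weights, so the gap is the discrepancy measured by $\W_1(p_T(Z),\bar p(Z))$; a direct KR bound contributes the remaining $M$ factor. The factor of two in the stated coefficients then comes from symmetrizing the two-sided absolute values using the Lipschitz structure on $\mathcal H$, after which collecting the contributions on (A) and (L) reproduces the bound.

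The main technical obstacle, I expect, is the passage in Step (I) from a weighted sum of per-class Wasserstein distances to the single marginal Wasserstein $\W_1(p_N(Z),\bar p(Z))$ at the controlled cost $1/\min_k \bm{p}_N^{Y=k}$. Convexity of Wasserstein gives the reverse direction for free, so the direction needed here requires a carefully constructed coupling on $(Z,Y)$ whose $Z$-marginalization is class-aligned---this is precisely the service performed by \Assref{assumption:cyclical_monotonicity} applied to the pair $(N,T)$.
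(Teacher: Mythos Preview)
Your approach is essentially the paper's: the hybrid measure $\bar p$ is exactly the paper's $p_{\widetilde T}$, the split into (I) and (II) matches the paper's add-and-subtract of $\bm{p}_N^{Y=k} p_T(\z|k)$, and the passage from per-class Wasserstein distances to $\W_1(p_N(Z),\bar p(Z))$ via cyclical monotonicity is precisely the identity from Proposition~2 of \cite{Rakotomamonjy2020} that the paper invokes, followed by the same $\W_1$ triangle inequality. The only cosmetic differences are that the paper first drops the weights via $\bm{p}_N^{Y=k}\le 1$ and then reintroduces them through $1/\min_k \bm{p}_N^{Y=k}$ (whereas you keep the weighted sum and then loosen), and the factor $2M$ comes from the black-box bound of \cite{Shen2018} applied to each risk difference rather than from any symmetrization of absolute values.
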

\vspace{-1em}
We first analyze our upper-bound in \eqref{eq:bounds}. The target risk of $f_N$ is controlled by three main terms: the first \ref{term_C} is the risk of $f_N$ on domain $N$, the second \ref{term_A} is the Wasserstein-1 distance between latent marginals of domain $N$ and $T$, the third term \ref{term_L} measures a divergence between label distributions using, as a proxy, two ad-hoc marginal distributions. There are two other terms in \eqref{eq:bounds}: first, a Lipschitz-constant $M$ that can be made small by implementing $f_N$ as a NN with piece-wise linear activations and regularized weights; second the minimum of $\pNY$, which says that generalization is harder when a target class is less represented. We learn \texttt{OSTAR}'s components to minimize the r.h.s of \eqref{eq:bounds}. Terms \ref{term_C} and \ref{term_A} can be explicitly minimized, respectively by training a classifier $f_N$ on domain $N$ and by learning $(\phi, \pNY)$ to match marginals of domains $N$ and $T$. Term \ref{term_L} is not computable, yet its minimization is naturally handled by \texttt{OSTAR}. Indeed, term \ref{term_L} is minimal when $\pNY=\pTY$ under \Assref{assumption:linear_indep} per \cite{Redko2019}. With \texttt{OSTAR}, this sufficient condition is guaranteed in \Propref{proposition:optimality} by the solution to problem \eqref{eq:ot_problem} under our mild assumptions in \Secref{sec:ass}. This solution defines a domain $N$ for which $\pNY=\pTY$ and  term \ref{term_A} equals zero. 

We now detail the originality of this result over existing Wasserstein-based generalization bounds. First, our upper-bound is general and can be explicitly minimized even when target labels are unknown unlike \cite{Shui2021} or \cite{Combes2020} which require knowledge of $\pTY$ or its perfect estimation. \cite{Combes2020} claims that correct estimation of $\pTY$ i.e. \ref{term_L} close to zero, is achieved under \textit{GLS} i.e. $p_S(Z|Y)=p_T(Z|Y)$. However, \textit{GLS} is hardly guaranteed when the latent space is learned: a sufficient condition in \cite{Combes2020} requires knowing $\pTY$, which is unrealistic in UDA. Second, our bound is simpler than the one in \cite{Rakotomamonjy2020}: in particular, it removes several redundant terms which are unmeasurable due to unknown target labels.

\section{Implementation}
\label{sec:model_algo}

Our solution, detailed below, minimizes the generalization bound in \eqref{eq:bounds}. It implements $f_N, g$ with NNs and $\phi$ with a residual NN. Our solution jointly solves (i) a classification problem to account for term \ref{term_C} in \eqref{eq:bounds} and (ii) an alignment problem on pretrained representations \eqref{eq:ot_problem} to account for terms \ref{term_A} and \ref{term_L} in \eqref{eq:bounds}. Our pseudo-code and runtime / complexity analysis are presented in \Appref{app:pseudo-code}. 
\paragraph{Encoder initialization}
Prior to learning $\phi, \pNY, f_N$, we first learn the encoder $g$ jointly with a source classifier $f_S$ to yield a zero source classification loss via \eqref{eq:C-S}. With $\mathcal{L}_{ce}$ the cross-entropy loss,
\begin{equation}
    \min_{f_S, g} \mathcal{L}_{c}^g(f_S, S) \triangleq \min_{f_S, g} \dfrac{1}{n} \sum_{i=1}^n \mathcal{L}_{ce}(f_{S} \circ g(\xspi), y_S^{(i)}) \label{eq:C-S}
\end{equation}
$g$ is then fixed to preserve the original problem structure. This initialization step helps enforce Assumption \ref{assumption:cluster_assumption} and could alternatively be replaced by directly using a pretrained encoder if available. 

\paragraph{Joint alignment and classification} 
We then solve alternatively (i) a classification problem on domain $N$ w.r.t. $f_N$ to account for term \ref{term_C} in \eqref{eq:bounds} and (ii) the problem \eqref{eq:ot_problem} w.r.t. $(\phi, \pNY)$ to account for term \ref{term_A}. Term \ref{term_L} in \eqref{eq:bounds} is handled by matching $\pNY$ and $\pTY$ through the minimization of problem \eqref{eq:ot_problem}. $\pNY$ is estimated with the confusion-based approach in \cite{Lipton2018}, while $\phi$ minimizes the Lagrangian relaxation, with hyperparameter $\lambda_{OT}$, of the constrained optimization problem \eqref{eq:ot_problem}. The equality constraint in \eqref{eq:ot_problem} is measured through a Wasserstein distance as our bound in \eqref{eq:bounds} is tailored to this distance, however, we are not restricted by this choice as other discrepancy measures are also possible. The objective based on \eqref{eq:bounds} corresponds to the following optimization problem:
\begin{align}
    &\begin{aligned}
        &\min_{\phi, f_N} \mathcal{L}_{c}^g(f_N, N) + \lambda_{OT}~ \mathcal{L}_{OT}^g(\phi) + \mathcal{L}_{wd}^g(\phi, \pNY) \\
        &\text{subject to } \pNY = \argmin_{\mathbf{p} \geq 0, \mathbf{p} \in \Delta_K} \frac{1}{2}\|\pTYh-\hat{\mathbf{C}} \dfrac{\mathbf{p}}{\pSY}\|_{2}^{2}  \qquad\qquad\qquad\qquad\qquad \text{\tiny[Label proportion estimation]}
    \end{aligned} \tag{CAL} \label{eq:CAL} \\
    &\text{~where~} \mathcal{L}_{c}^g(f_N, N) \triangleq \dfrac{1}{n} \sum_{i=1}^n \dfrac{\bm{p}_{N}^{y_S^{(i)}}}{\bm{p}_{S}^{y_S^{(i)}}}\mathcal{L}_{ce}(f_N \circ \phi \circ g(\xspi), y_S^{(i)}) \label{eq:C-N} \qquad\qquad \text{\tiny[Classification loss in $\Z_{\hat{N}}$]}\\
    &\text{~and~} \mathcal{L}_{OT}^g(\phi) \triangleq \sum_{k=1}^K \dfrac{1}{\#\{y_S^{(i)}=k\}_{i \in \llbracket1, n\rrbracket}} \sum\limits_{\substack{y_S^{(i)}=k, i \in \llbracket1, n\rrbracket}} \|\phi(\zspi)- \zspi\|_2^2 ~~~ \text{\tiny[Objective function of \eqref{eq:ot_problem}]} \label{eq:cphi_emp} \vspace{-1em}\\ 
    &\text{~and~} \mathcal{L}_{wd}^g(\phi, \pNY) \triangleq \sup _{\|v\|_{L} \leq 1} \frac{1}{n}\sum_{i=1}^n \dfrac{\bm{p}_{N}^{y_S^{(i)}}}{\bm{p}_{S}^{y_S^{(i)}}}~v \circ \phi(\zspi)- \frac{1}{m}\sum_{j=1}^m v(\ztpj) \label{eq:wass_emp} ~~~ \text{\tiny[Relaxed equality constraint in \eqref{eq:ot_problem}]}
\end{align}
$\hat{\mathbf{C}}$ is the confusion matrix of $f_N$ on domain $N$, $\pTYh \in \Delta_K$ is the target label proportions estimated with $f_N$. $\mathcal{L}_{c}^g(f_N, N)$ \eqref{eq:C-N} is the classification loss of $f_N$ on $\Z_{\widehat{N}}$, derived in \Appref{app:proof}, which minimizes term \ref{term_C} in \eqref{eq:bounds}. Note that samples in domain $N$ are obtained by mapping source samples with $\phi$; they are reweighted to account for label shift. $\mathcal{L}_{OT}^g$ \eqref{eq:cphi_emp} defines the transport cost of $\phi$ on $\Z_{\widehat{S}}$. Implementing $\phi$ with a ResNet performed better than standard MLPs, thus we minimize in practise the dynamical transport cost, better tailored to residual maps and used in \cite{bezenac2020,Karkar2020}. $\mathcal{L}_{wd}^g$ \eqref{eq:wass_emp} is the empirical form of the dual Wasserstein-1 distance between $p_N^\phi(Z)$ and $p_T(Z)$ and seeks at enforcing the equality constraint in problem \eqref{eq:ot_problem} i.e. minimizing term \ref{term_A}. \texttt{OSTAR}'s assumptions also guarantee that term \ref{term_L} is small at the optimum. 

\paragraph{Improve target discriminativity}
\texttt{OSTAR} solves a transfer problem with pretrained representations, but cannot change their discriminativity in the target. 
This may harm performance when the encoder $g$ is not well suited for the target.
Domain-invariant methods are less prone to this issue as they update target representations.
In our upper-bound in \eqref{eq:bounds}, target discriminativity is assessed by the value of term \ref{term_C} at optimum of the alignment problem \eqref{eq:ot_problem}.
This value depends on $g$ and may be high since $g$ has been trained with only source labels. 
Nevertheless, it can be reduced by updating $g$ using target outputs such that class separability for target representations is better enforced. 
To achieve this goal, we propose an extension of \eqref{eq:CAL} using Information Maximization (IM), not considered in existing domain-invariant \texttt{GeTarS} methods. 
IM was originally introduced for source-free adaptation without alignment \citep{Liang2020}. In this context, target predictions are prone to errors and there is no principled way to mitigate this problem. In our case, we have access to source samples and \texttt{OSTAR} minimizes an upper-bound to its target error, which avoids performance degradation.
IM refines the decision boundaries of $f_N$ with two terms on target samples. $\mathcal{L}_{ent}^g(f_N,T)$ \eqref{eq:Lent} is the conditional entropy of $f_N$ which favors low-density separation between classes. 
Denoting $\delta_k(\cdot)$ the $k$-th component of the softmax function,
\vspace{-0.5em}
\begin{equation}
    \mathcal{L}_{ent}^{g}(f_N,T) = \sum_{i=1}^m \sum_{k=1}^K \delta_k(f_N \circ g(\xtpi)) \log(\delta_k(f_N \circ g(\xtpi))) \label{eq:Lent} 
\end{equation}
$\mathcal{L}_{div}^g(f_N,T)$ \eqref{eq:Ldiv} promotes diversity by regularizing the average output of $f_N \circ g$ to be uniform on $T$. It avoids predictions from collapsing to the same class thus softens the effect of conditional entropy.
\vspace{-0.5em}
\begin{equation}
    \mathcal{L}_{div}^{g}(f_N,T) = \sum_{k=1}^{K} \hat{p}_{k} \log \hat{p}_{k} = D_{K L}(\hat{p}, \frac{1}{K} \mathbf{1}_{K})-\log K \text{;~} \hat{p}=\mathbb{E}_{\xt \in \X_T}\left[\delta\left(f_N \circ g(\xt)\right)\right] \label{eq:Ldiv}
\end{equation}
Our variant introduces two additional steps in the learning process. First, the latent space is fixed and we optimize $f_N$ with IM in \eqref{eq:SS}. Then, we optimize the representations  in \eqref{eq:SSg} while avoiding modifying source representations by including $\mathcal{L}_{c}^g(f_S, S)$ \eqref{eq:C-S} with a fixed source classifier $f_S$.
\begin{align}
    &\min_{f_N} \mathcal{L}_{c}^g(f_N, N) + \mathcal{L}_{ent}^{g}(f_N,T) + \mathcal{L}_{div}^{g}(f_N,T) \tag{SS} \label{eq:SS} \\
    &\min_{f_N, g} \mathcal{L}_{c}^g(f_N, N)  + \mathcal{L}_{ent}^{g}(f_N,T) + \mathcal{L}_{div}^{g}(f_N,T) + \mathcal{L}_{c}^g(f_S, S) \label{eq:SSg} \tag{SSg}
\end{align}

\section{Experimental results}
\label{sec:experiments}
We now present our experimental results on several UDA problems under \texttt{GeTarS} and show that \texttt{OSTAR} outperforms recent SOTA baselines. The \texttt{GeTarS} assumption is particularly relevant on our datasets as encoded conditional distributions do not initially match as seen in Appendix \Figref{fig:tsne_z}.

\paragraph{Setting}
We consider: (i) an academic benchmark \texttt{Digits} with two adaptation problems between USPS \citep{Hull1994} and MNIST \citep{LeCun1998}, (ii) a synthetic to real images adaptation benchmark \texttt{VisDA12} \citep{Peng2017} and (iii) two object categorizations problems \texttt{Office31} \citep{Saenko2010}, \texttt{OfficeHome} \citep{Venkateswara2017} with respectively six and twelve adaptation problems. The original datasets have fairly balanced classes, thus source and target label distributions are similar. This is why we subsample our datasets to make label proportions dissimilar across domains as detailed in Appendix \Tabref{tab:dataconfig}. For \texttt{Digits}, we subsample the target domain and investigate three settings - balanced, mild and high as \cite{Rakotomamonjy2020}. For other datasets, we modify the source domain by considering 30$\%$ of the samples coming from the first half of their classes as \cite{Combes2020}. We report a \texttt{Source} model, trained using only source samples without adaptation, and various UDA methods: two \texttt{CovS} methods and three recent SOTA \texttt{GeTarS} models. Chosen UDA methods learn invariant representations with reweighting in \texttt{GeTarS} models or without reweighting in other baselines. The two \texttt{CovS} baselines are \texttt{DANN} \citep{Ganin2016} which approaches $\mathcal{H}$-divergence and $\texttt{WD}_{\beta=0}$ \citep{Shen2018} which computes Wasserstein distance. The \texttt{GeTarS} baselines are \cite{Wu2019,Rakotomamonjy2020,Combes2020}. We use Wasserstein distance to learn invariant representations such that only the strategy to account for label shift differs. \cite{Wu2019}, denoted $\texttt{WD}_{\beta}$, performs assymetric alignment with parameter $\beta$, for which we test different values ($\beta \in \{1,2\}$). 
\texttt{MARSc}, \texttt{MARSg} \citep{Rakotomamonjy2020} and \texttt{IW-WD} \citep{Combes2020} estimate target proportions respectively with optimal assignment or with the estimator in \cite{Lipton2018} also used in \texttt{OSTAR}. We report \texttt{DI-Oracle}, an oracle which learns invariant representations with Wasserstein distance and makes use of true class-ratios. Finally, we report \texttt{OSTAR} with and without IM. All baselines are reimplemented for a fair comparison with the same NN architectures detailed in \Appref{app:experimental_setup}.

\paragraph{Results}
In \Tabref{table:image_results}, we report, over 10 runs, mean and standard deviations for balanced accuracy, i.e. average recall on each class. This metric is suited for imbalanced problems \citep{Brodersen2010}. For better visibility, results are aggregated over all imbalance settings of a dataset (line ``subsampled``). 
In the Appendix, full results are reported in \Tabref{table:image_results_full} along the $\ell_1$ target proportion estimation error for \texttt{GeTarS} baselines and \texttt{OSTAR+IM} in \Figref{fig:image_results_label}. First, we note that low estimation error of $\pTY$ is correlated to high accuracy for all models and that \texttt{DI-Oracle} upper-bounds domain-invariant approaches. This shows the importance of correctly estimating $\pTY$. Second, we note that \texttt{OSTAR} improves \texttt{Source} (column 2) and is competitive w.r.t. \texttt{IW-WD} (column 9) on \texttt{VisDA}, \texttt{Office31} and \texttt{OfficeHome} and \texttt{MARSg} (column 7) on \texttt{Digits} although it keeps target representations fixed unlike these two methods. \texttt{OSTAR+IM} (column 11) is less prone to initial target discriminativity and clearly outperforms or equals the baselines on both balanced accuracy and proportion estimation. It even improves \texttt{DI-Oracle} (column 12) for balanced accuracy despite not using true class-ratios. This (i) shows the benefits of not constraining domain-invariance which may degrade target discriminativity especially under label estimation errors; (ii) validates our theoretical results which show that \texttt{OSTAR} controls the target risk and recovers target proportions. We visualize how \texttt{OSTAR} aligns source and target representations in Appendix \Figref{fig:tsne_z}.
\begin{table}
    \centering
    \renewcommand{\arraystretch}{1.3}
    \caption{Balanced accuracy ($\uparrow$) over 10 runs. The best performing model is indicated in \textbf{bold}. Results are aggregated over all imbalance scenarios and adaptation problems within a same dataset.}
    \resizebox{\linewidth}{!}{
        \begin{tabular}{cccccccccccc}
            \toprule
            Setting & \texttt{Source} & \texttt{DANN} & ${WD}_{\beta=0}$ & ${WD}_{\beta=1}$ & ${WD}_{\beta=2}$ & \texttt{MARSg} & \texttt{MARSc} & \texttt{IW-WD} & \texttt{OSTAR} & \texttt{OSTAR+IM} & \texttt{DI-Oracle} \\
            \midrule
            \multicolumn{12}{c}{\texttt{Digits}} \\
            \midrule 
            balanced & $74.98 \pm 3.8 $ & $90.81 \pm 1.3 $ & $92.63 \pm 1.0$ & $82.80 \pm 4.7$ & $76.07 \pm 7.1$ & $92.18 \pm 2.2 $ & $94.91 \pm 1.4$ & $95.89 \pm 0.5$ & $91.66 \pm 0.9$ & $\mathbf{97.51 \pm 0.3}$ & $96.90 \pm 0.2$ \\ 
            subsampled & $75.05 \pm 3.1$ & $89.91 \pm 1.5$ & $89.45 \pm 1.0$ & $81.56 \pm 4.8$ & $77.77 \pm 6.5$ & $91.87 \pm 2.0$ & $93.75 \pm 1.4$ & $93.22 \pm 1.1$ & $88.39 \pm 1.5$ & $\mathbf{96.69 \pm 0.7} $ & $96.43 \pm 0.3$\\ 
            \midrule
            \multicolumn{12}{c}{\texttt{VisDA12}} \\
            \midrule 
            original & $48.63 \pm 1.0$ & $53.72 \pm 0.9$ & $57.40 \pm 1.1$ & $47.56 \pm 0.8$ & $36.21 \pm 1.8$ & $55.62 \pm 1.6$ & $55.33 \pm 0.8$ & $51.88 \pm 1.6$ & $50.37 \pm 0.6$ & $\mathbf{59.24 \pm 0.5}$ & $57.61 \pm 0.3$ \\
            subsampled & $42.46 \pm 1.4$ & $47.57 \pm 0.9$ & $47.32 \pm 1.4$ & $41.48 \pm 1.6$ & $31.83 \pm 3.0$ & $55.00 \pm 1.9$ & $51.86 \pm 2.0$ & $50.65 \pm 1.5$ & $49.05 \pm 0.9$ & $\mathbf{58.84 \pm 1.0}$ & $55.77 \pm 1.1$ \\
            \midrule
            \multicolumn{12}{c}{\texttt{Office31}} \\\midrule
            subsampled & $74.50 \pm 0.5$ & $76.13 \pm 0.3$ & $76.24 \pm 0.3$ & $74.23 \pm 0.5$ & $72.40 \pm 1.8$ & $80.20 \pm 0.4$ & $80.00 \pm 0.5$ & $77.28 \pm 0.4$ & $76.19 \pm 0.8$ & $\mathbf{82.61 \pm 0.4}$ & $81.07 \pm 0.3$ \\
            \midrule
            \multicolumn{12}{c}{\texttt{OfficeHome}} \\\midrule
            subsampled & $50.56 \pm 2.8$ & $50.87 \pm 1.05$ & $53.47 \pm 0.7$ & $52.24 \pm 1.1$ & $49.48 \pm 1.3$ & $ 56.60 \pm 0.4$ & $56.22 \pm 0.6$ & $ 54.87 \pm 0.4$ & $54.64 \pm 0.7$ & $\mathbf{59.51 \pm 0.4}$ & $57.97 \pm 0.3$ \\
            \bottomrule
        \end{tabular}
    } \label{table:image_results}
    \renewcommand{\arraystretch}{1}
    \vspace{-.5em}
\end{table}

\paragraph{Ablation studies}
We perform two studies. We first measure the \textbf{role of IM in our model}. In Appendix \Tabref{table:ablation_ssl_app}, we show the contribution of \eqref{eq:SS} (column 3) and \eqref{eq:SSg} (column 4) to \eqref{eq:CAL} (column 2). We also evaluate the effect of IM on our baselines in Appendix \Tabref{table:ssl_di}, even if this is not part of their original work. IM improves performance on \texttt{VisDA} and \texttt{Office} and degrades it on \texttt{Digits}. The performance remains below the ones of \texttt{OSTAR+IM}. 
Finally, we evaluate the impact of IM on our upper-bound in \eqref{eq:bounds} in Appendix \Tabref{table:im_bound}.
We assume that target conditionals are known to compute term \ref{term_L}. 
We observe that term \ref{term_C}, related to target discriminativity, and the alignment terms \ref{term_A} and \ref{term_L} are reduced. This explains the improvements due to IM and shows empirically that IM helps better optimize the three functions $\phi, g, f_N$ in our upper-bound. 
The second study in \Tabref{table:ablation_ot} measures the \textbf{effect of the OT transport cost} in our objective \eqref{eq:CAL}. \Propref{proposition:optimality} showed that OT guarantees recovering target proportions and matching conditionals. 
We consider MNIST$\rightarrow$USPS under various shifts (Line 1) and initialization gains i.e. the standard deviation of the weights of the NN  (Line 2). On line 1, we note that $\lambda_{OT} \neq 0$ in problem \eqref{eq:CAL} improves balanced accuracy (left) and $\ell_1$ estimation error (middle) over $\lambda_{OT} = 0$ over all shifts, especially high ones. This improvement is correlated with lower mean and standard deviation of the normalized transport cost per sample (right). We observe the same trends when changing initialization gains in Line 2. This confirms our theoretical results and shows the advantages of OT regularization biases for performance and stability. In Appendix \Tabref{table:ablation_ot_app}, we test additional values of $\lambda_{OT}$ and see that \texttt{OSTAR} recovers the \texttt{Source} model under high $\lambda_{OT}$.
Indeed, the latter constrains $\phi \approx \mathrm{Id}$.
\begin{table}[h!]
    \centering
    \caption{Effect of OT on balanced accuracy $\uparrow$ (left), $\ell_1$ estimation error $\downarrow$ (middle) and normalized transport cost $\mathcal{L}^g_{OT} / n$ $\downarrow$ (right) for MNIST$\rightarrow$USPS. The best model for accuracy is in \textbf{bold} with a ``$\star$``.
    We consider varying shifts (Line 1) and initialization gains (stdev of the NN's weights) (Line2)} \label{table:ablation_ot}
    \subfloat{
    \begin{minipage}{.35\textwidth}
      \centering
      \begin{adjustbox}{max width=0.95\textwidth}
        \begin{tabular}{ccc}
            \toprule
            \multicolumn{3}{c}{MNIST$\rightarrow$USPS - initialization gain 0.02} \\
            \midrule
            Shift & $\lambda_{OT}=0$ & $\lambda_{OT}=10^{-2}$ \\
            \midrule
            balanced & $\mathbf{94.92 \pm 0.6}$ & $\mathbf{95.12 \pm 0.6}$ \\
            mild & $88.28 \pm 1.5$ & $\mathbf{91.77 \pm 1.2}$ \\ 
            high & $85.24 \pm 1.6$ & $\mathbf{88.55 \pm 1.1}$ \\ 
            \bottomrule
        \end{tabular}
        \end{adjustbox}
    \end{minipage}%
    \begin{minipage}{.65\linewidth}
      \centering
      \includegraphics[width=0.49\linewidth,trim={0 0.9cm 0 0},clip]{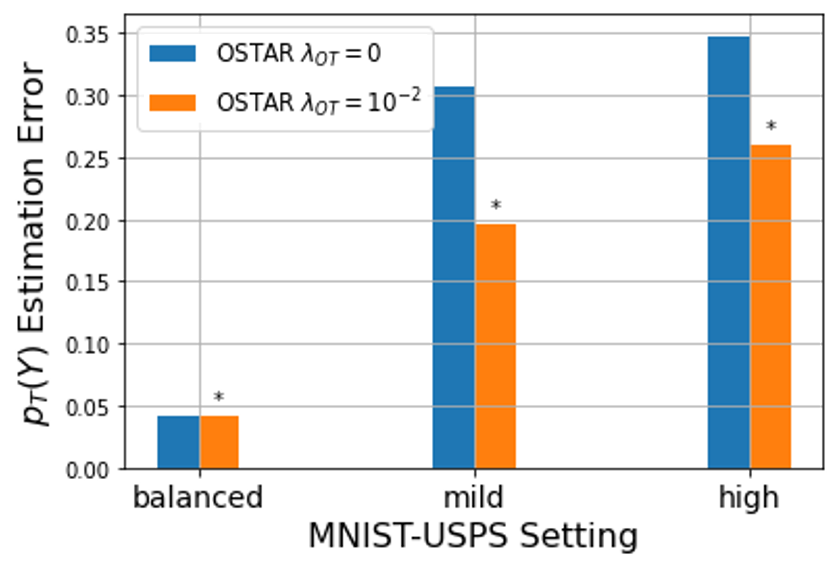}
      \includegraphics[width=0.49\textwidth,trim={0 0.9cm 0 0},clip]{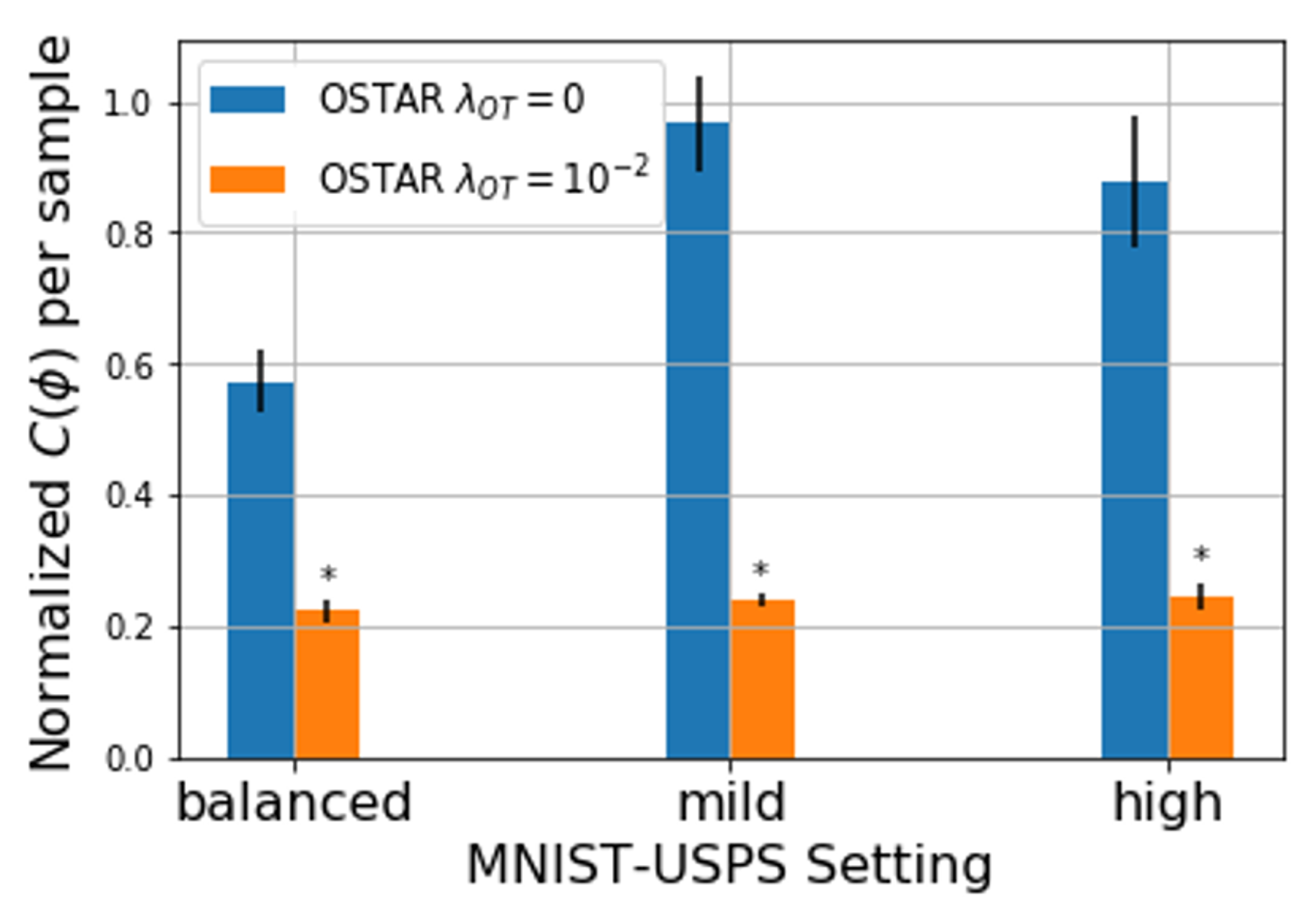}
    \end{minipage}} \vspace{0.5em}\\
    \subfloat{
    \begin{minipage}{0.35\linewidth}
        \centering
        \begin{adjustbox}{max width=0.95\textwidth}
        \begin{tabular}{ccc}
            \toprule 
            \multicolumn{3}{c}{MNIST$\rightarrow$USPS - high imbalance} \\
            \midrule
            Init Gain & $\lambda_{OT}=0$ & $\lambda_{OT}=10^{-2}$ \\
            \midrule
            $0.02$ & $85.24 \pm 1.6$ & $\mathbf{88.55 \pm 1.1}$ \\
            $0.1$ & $84.62 \pm 2.3$ & $\mathbf{88.41 \pm 1.3}$ \\
            $0.3$ & $83.11 \pm 2.4$ & $\mathbf{89.41 \pm 1.6}$ \\
            \bottomrule
        \end{tabular}
        \end{adjustbox}
    \end{minipage}%
    \begin{minipage}{0.65\linewidth}
      \centering
      \includegraphics[width=0.49\linewidth,trim={0 1.0cm 0 0},clip]{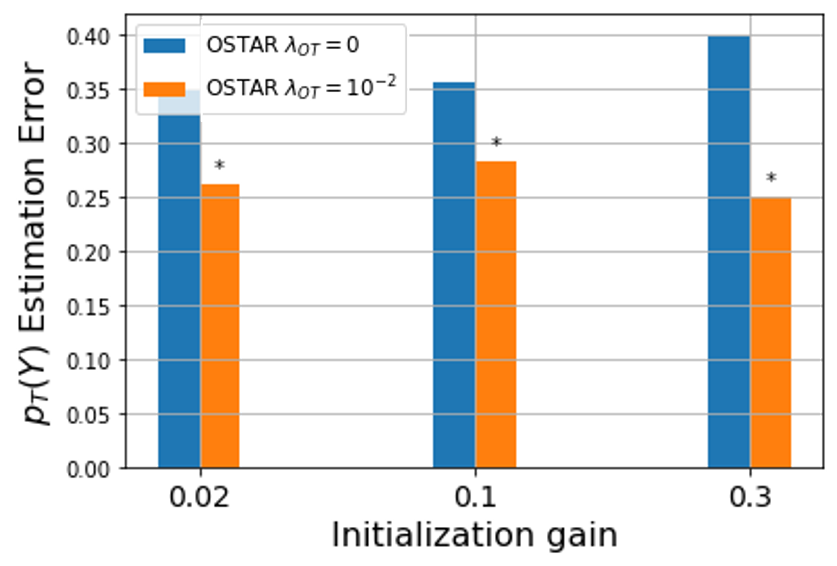}
      \includegraphics[width=0.49\textwidth,trim={0 1.0cm 0 0},clip]{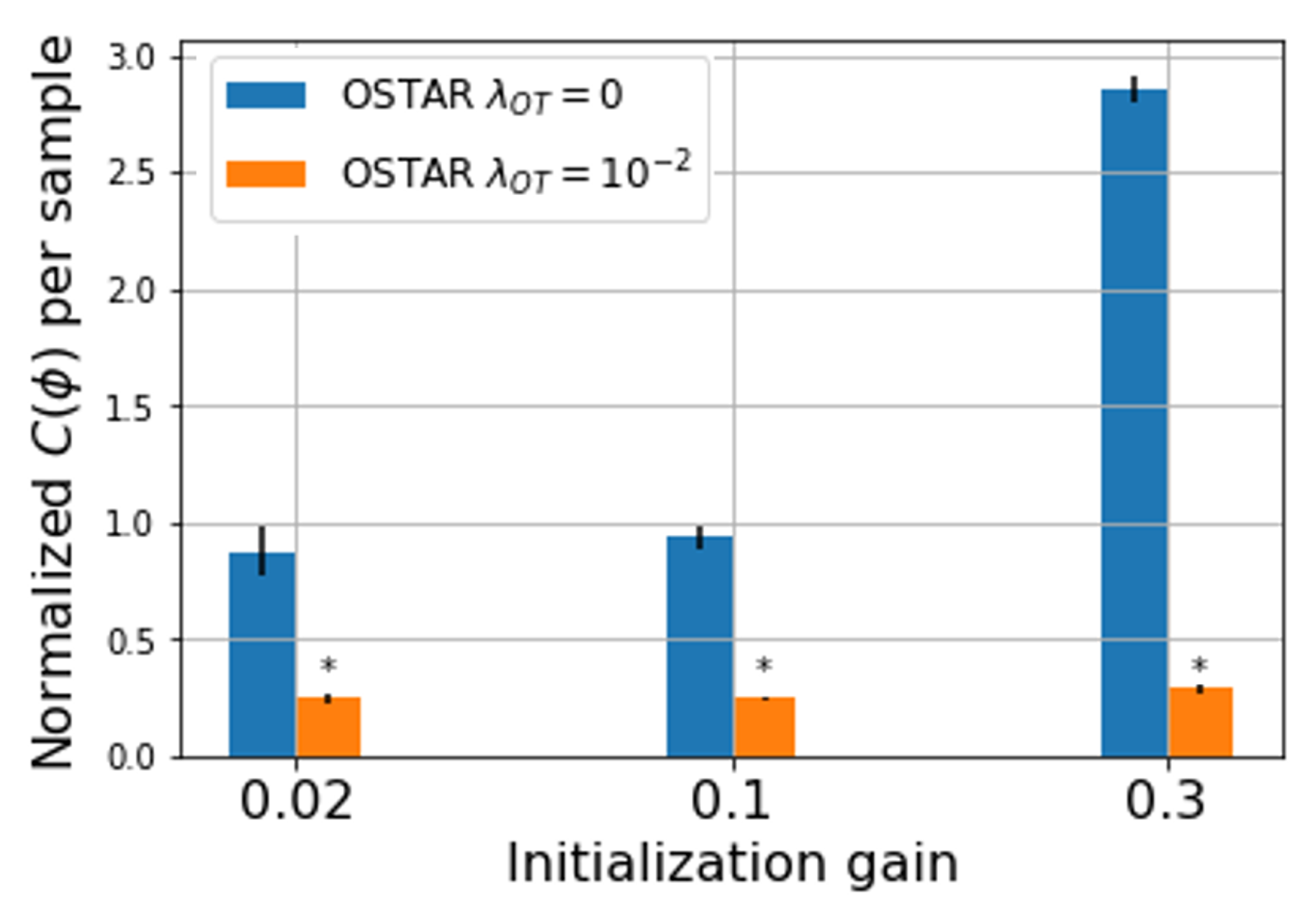}
    \end{minipage}}
    \vspace{-1.8em}
\end{table}

\section{Related Work}
\label{sec:related_work}
\paragraph{UDA} Existing approaches train a classifier using source labels while handling distribution shifts. We review two main approaches: the first learns invariant representations and the second learns a map between fixed samples from source and target domains. While domain-invariant approaches are SOTA, mapping-based approaches avoid structure issues posed by the invariance constraint which may degrade target discriminativity \citep{Liu2019,Chen2019}. For \texttt{CovS}, domain-invariant methods directly match marginal distributions in latent space \citep{Ganin2016,Shen2018,Long2015}, while mapping-based approaches learn a mapping in input space \citep{Courty2015,Hoffman2018}. For \texttt{TarS} and \texttt{GeTarS}, label shift requires estimating $\pTY$ to reweight source samples by estimated class-ratios \citep{Zhao2019}. An alternative is to use a fixed weight \citep{Wu2019}. When conditionals are unchanged i.e. \texttt{TarS}, $\pTY$ can be recovered without needs for alignment \citep{Lipton2018,Redko2019}. Under \texttt{GeTarS}, there is the additional difficulty of matching conditionals. The SOTA methods for \texttt{GeTarS} are domain-invariant with sample reweighting \citep{Rakotomamonjy2020,Combes2020,Gong2016}. An earlier mapping-based approach was proposed in \cite{Zhang2013} to align conditionals, also under reweighting. Yet, it is not flexible, operates on high dimensional input spaces and does not scale up to large label spaces as it considers a linear map for each pair of conditionals. Estimators used in \texttt{GeTarS} are confusion-based in \cite{Combes2020,Shui2021}; derived from optimal assignment in \cite{Rakotomamonjy2020} or from the minimization of a reweighted MMD between marginals in \cite{Zhang2013,Gong2016}. \texttt{OSTAR} is a new mapping-based approach for \texttt{GeTarS} with improvements over these approaches as detailed in this paper. The improvements are both practical (flexibility, scalability, stability) and theoretical. CTC \citep{Gong2016} also operates on representations, yet \texttt{OSTAR} simplifies learning by clearly separating alignment and encoding operations, intertwined in CTC as both encoder and maps are trained to align.

\paragraph{OT for UDA} 
A standard approach is to compute a transport plan between empirical distributions with linear programming (LP). LP is used in \texttt{CovS} to align joint distributions \citep{Courty2017,Damodaran2018} or to compute a barycentric mapping in input space \citep{Courty2015}; and in \texttt{TarS} to estimate target proportions through minimization of a reweighted Wasserstein distance between marginals \citep{Redko2019}. An alternative to LP is to learn invariant representation by minimizing a dual Wasserstein-1 distance with adversarial training as \cite{Shen2018} in \texttt{CovS} or, more recently, \cite{Rakotomamonjy2020,Shui2021} for \texttt{GeTarS}. \cite{Rakotomamonjy2020} formulates two separate OT problems for class-ratio estimation and conditional alignment and \cite{Shui2021} is the OT extension of \cite{Combes2020} to multi-source UDA. \texttt{OSTAR} is the first OT \texttt{GeTarS} approach that does not learn invariant representations.
It has several benefits: (i) representation are kept separate, thus target discriminativity is not deteriorated, (ii) a single OT problem is solved unlike \cite{Rakotomamonjy2020}, (iii) the OT map is implemented with a NN which generalizes beyond training samples and scales up with the number of samples unlike barycentric OT maps; (iv) it improves efficiency of matching by encoding samples, thereby improving discriminativity and reducing dimensionality, unlike \cite{Courty2015}.

\section{Conclusion}
We introduced \texttt{OSTAR}, a new general approach to align pretrained representations under \texttt{GeTarS}, which does not constrain representation invariance. \texttt{OSTAR} learns a flexible and scalable map between conditional distributions in the latent space. This map, implemented as a ResNet, solves an OT matching problem with native regularization biases. Our approach provides strong generalization guarantees under mild assumptions as it explicitly minimizes a new upper-bound to the target risk. Experimentally, it challenges recent invariant \texttt{GeTarS} methods on several UDA benchmarks.

\paragraph{Acknowledgement}
We acknowledge financial support from DL4CLIM ANR-19-CHIA- 0018-01, RAIMO ANR-20-CHIA-0021-01, OATMIL ANR-17-CE23-0012 and LEAUDS ANR-18-CE23-0020 Projects of the French National Research Agency (ANR).

\paragraph{Reproducibility Statement}
We provide explanations of our assumptions in \Secref{sec:ass} and provide our proofs in \Appref{app:proof}. The datasets used in our experiments are public benchmarks and we provide a complete description of the data processing steps and NN architectures in \Appref{app:experimental_setup}. Our source code is available at \url{https://github.com/mkirchmeyer/ostar}.

\bibliography{iclr2022_conference}
\bibliographystyle{iclr2022_conference}

\newpage
\appendix
\section{Additional visualisation}

We visualize how \texttt{OSTAR} maps source representations onto target ones under \texttt{GeTarS}. We note that \texttt{OSTAR} (i) maps source conditionals to target ones (blue and green points are matched c.f. left), (ii) matches conditionals of the same class ("v" and "o" of the same colour are matched c.f. right).
\vspace{-1.2em}
\begin{figure*}[h!]
	\begin{center}
	    \subfloat[\centering MNIST$\rightarrow$USPS balanced]{
	    {\includegraphics[width=0.5\textwidth, trim={0.68cm 0 0.75cm 0},clip]{./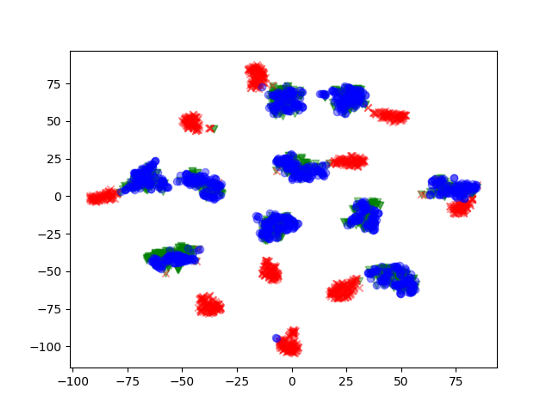}}
		\includegraphics[width=0.5\textwidth, trim={0.68cm 0 0.75cm 0},clip]{./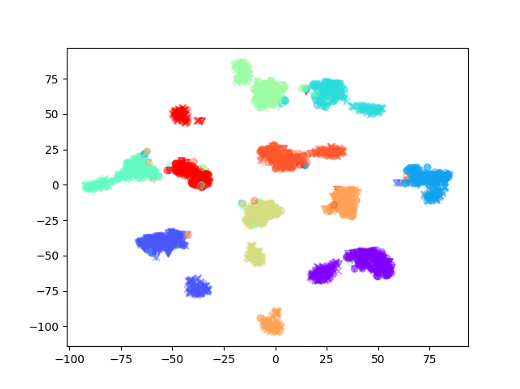}}\\ \vspace{-0.2em}
		\subfloat[\centering MNIST$\rightarrow$USPS subsampled high imbalance]{
	    {\includegraphics[width=0.5\textwidth, trim={0.68cm 0 0.75cm 0},clip]{./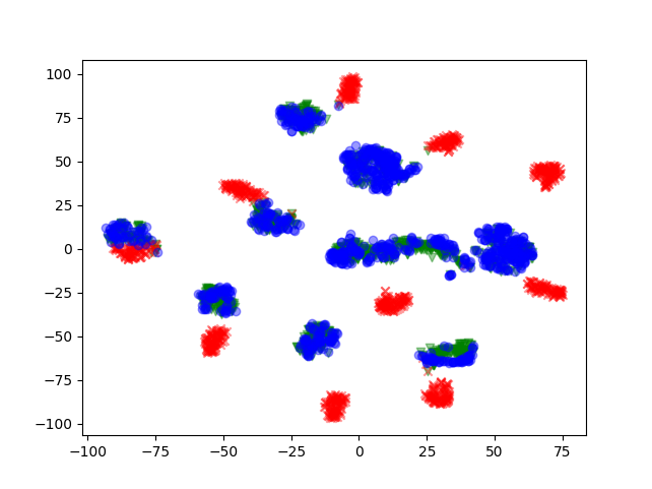}}
		\includegraphics[width=0.5\textwidth, trim={0.68cm 0 0.75cm 0},clip]{./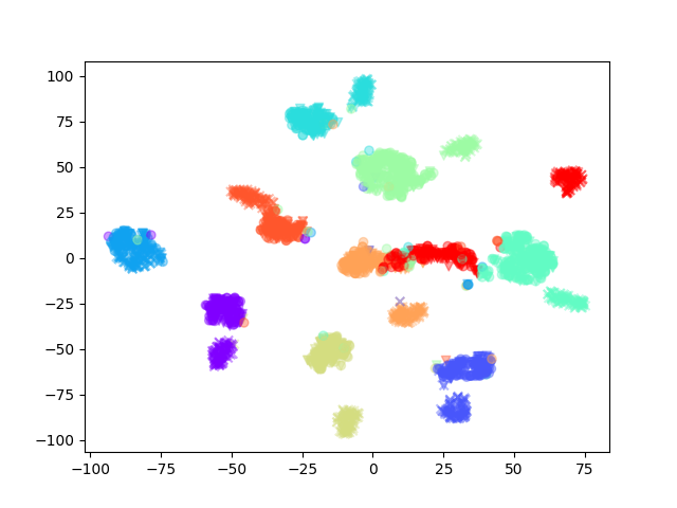}}\\ \vspace{-0.2em}
		\subfloat[\centering USPS$\rightarrow$MNIST balanced]{
	    {\includegraphics[width=0.5\textwidth, trim={0.68cm 0 0.75cm 0},clip]{./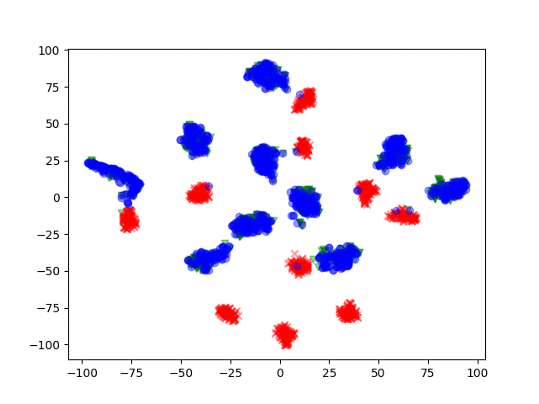}}
		\includegraphics[width=0.5\textwidth, trim={0.68cm 0 0.75cm 0},clip]{./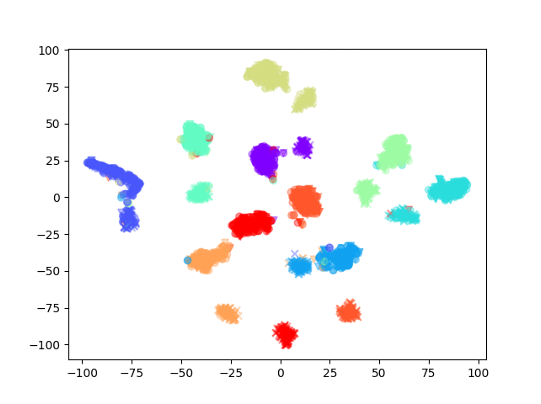}}
	\end{center}
\end{figure*}
\newpage
\begin{figure}[h!]
	\begin{center}
	    ~\hfill~\subfloat[\centering USPS$\rightarrow$MNIST subsampled high imbalance]{
	    {\includegraphics[width=0.5\textwidth, trim={0.6cm 0 0.92cm 0},clip]{./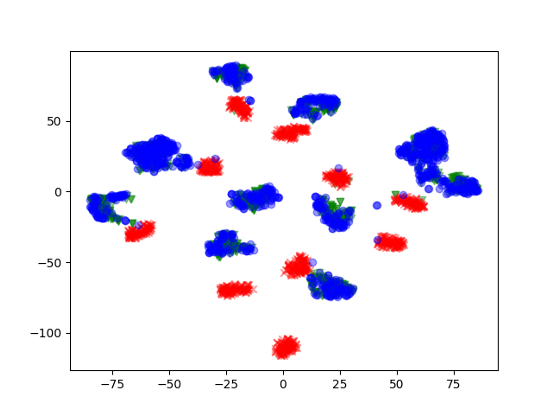}}~\hfill~
		\includegraphics[width=0.5\textwidth, trim={0.6cm 0 0.92cm 0},clip]{./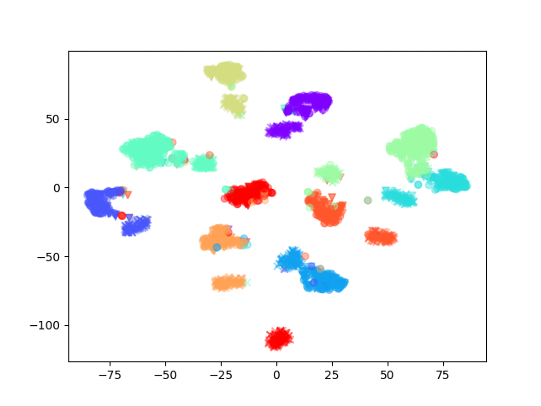}}\\
	    ~\hfill~\subfloat[\centering VisDA]{
	    {\includegraphics[width=0.5\textwidth, trim={1.0cm 0 1.35cm 0},clip]{./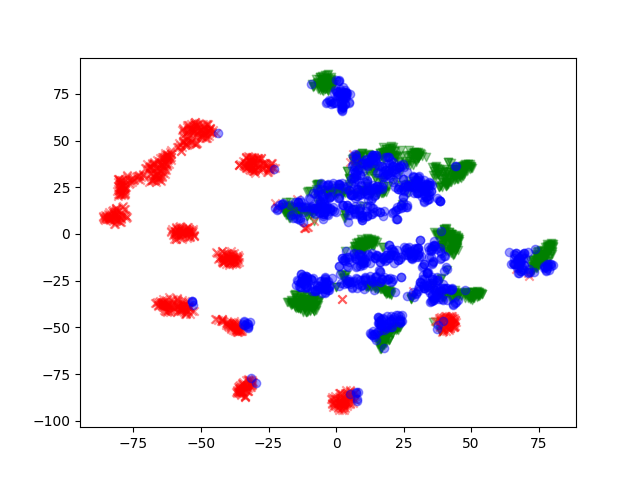}}~\hfill~
		\includegraphics[width=0.5\textwidth, trim={1.0cm 0 1.35cm 0},clip]{./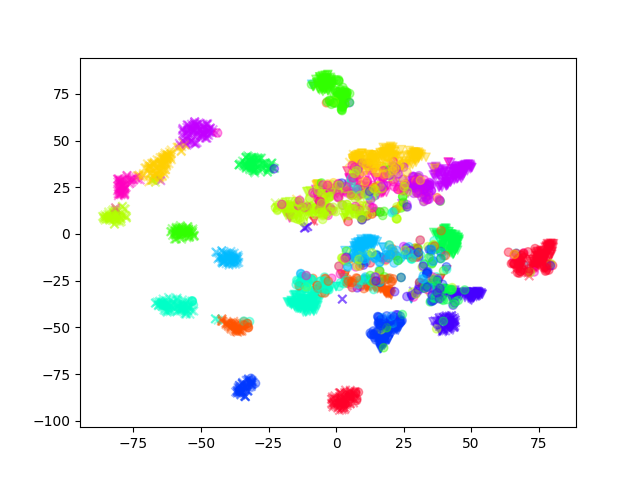}}\\
		~\hfill~\subfloat[\centering VisDA subsampled]{
	    {\includegraphics[width=0.5\textwidth, trim={1.0cm 0 1.35cm 0},clip]{./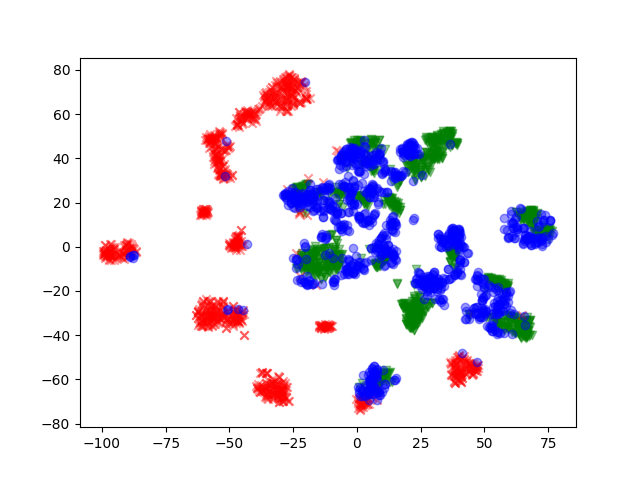}}~\hfill~
		\includegraphics[width=0.5\textwidth, trim={1.0cm 0 1.35cm 0},clip]{./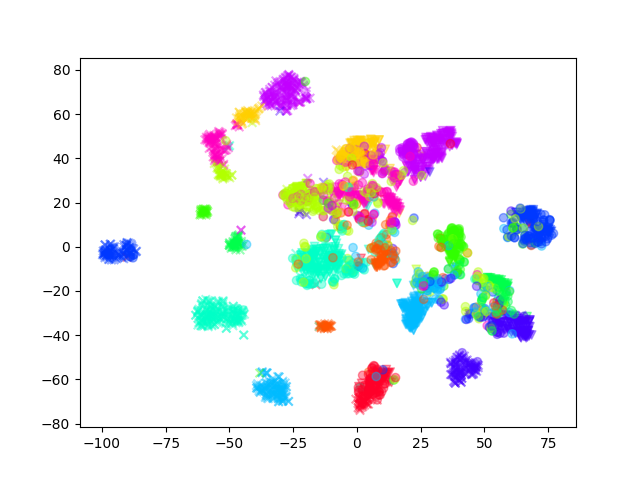}}\\
		\caption{t-SNE feature visualizations for \texttt{OSTAR} on various datasets and label imbalance. Crosses "x" denote source samples, circles "o" target samples and triangles "v" transported source samples. On the left, source samples are {\color{red} red}, target samples {\color{blue} blue} and transported source samples {\color{ForestGreen} green}. On the right, samples from the same class have the same colour regardless of domain.}
		\label{fig:tsne_z}
	\end{center}
\end{figure}

\newpage
\section{Additional results}
We report our full results below. Aggregated results for balanced accuracy over a dataset and imbalance scenarios are reported in \Tabref{table:image_results}. Prefix "s" refers to subsampled datasets defined in \Tabref{tab:dataconfig}.
\begin{table}[h!]
    \centering
    \renewcommand{\arraystretch}{1.3}
    \caption{Balanced accuracy ($\uparrow$) over 10 runs. The best performing model is indicated in \textbf{bold}.}
    \resizebox{\linewidth}{!}{
        \begin{tabular}{ccccccccccc}
            \toprule
            Setting & \texttt{Source} & \texttt{DANN} & ${WD}_{\beta=0}$ & ${WD}_{\beta=1}$ & ${WD}_{\beta=2}$ & \texttt{MARSg} & \texttt{MARSc} & \texttt{IW-WD} & \texttt{OSTAR+IM} & \texttt{DI-Oracle} \\
            \midrule 
            \multicolumn{11}{c}{\texttt{Digits} - MNIST$\rightarrow$USPS} \\
            \midrule 
            balanced & $86.94 \pm 1.1 $ & $93.97 \pm 0.9 $ & $94.42 \pm 0.6$ & $82.05 \pm 6.3$ & $75.14 \pm 6.7$ & $94.19 \pm 1.8 $ & $\mathbf{96.44 \pm 0.3} $ & $\mathbf{96.10 \pm 0.3}$ & $\mathbf{96.91 \pm 0.3} $ & $96.95 \pm 0.2$ \\ 
            mild & $87.10 \pm 2.0$ & $93.23 \pm 1.2 $ & $91.52 \pm 0.6$ & $84.07 \pm 5.0$ & $78.29 \pm 7.2$ & $94.78 \pm 1.0 $ & $95.18 \pm 0.9 $ & $94.72 \pm 0.4$ & $ \mathbf{96.18 \pm 1.0} $ & $97.22 \pm 0.2$ \\ 
            high & $86.23 \pm 2.8$ & $93.03 \pm 1.0 $ & $91.14 \pm 0.7$ & $85.15 \pm 2.9$ & $78.90 \pm 3.8$ & $94.50 \pm 1.3 $ & $ 95.07 \pm 0.6 $ & $94.60 \pm 0.8$ & $ \mathbf{96.06 \pm 0.6} $ & $96.87 \pm 0.4$ \\ 
            \midrule
            \multicolumn{11}{c}{\texttt{Digits} - USPS$\rightarrow$MNIST} \\
            \midrule 
            balanced  & $63.01 \pm 6.5$ & $ 87.64 \pm 1.7$ & $ 90.84 \pm 1.3$ & $83.54 \pm 3.0$ & $77.00 \pm 7.6$ & $90.16 \pm 2.5$ & $ 93.37 \pm 2.5$ & $ 95.68 \pm 0.6$ & $\mathbf{98.11 \pm 0.2}$ & $ 96.84 \pm 0.2$ \\
            mild & $ 62.81 \pm 2.3$ & $87.00 \pm 2.1 $ & $ 88.07 \pm 1.2$ & $77.83 \pm 5.4$ & $79.41 \pm 7.4$ & $89.93 \pm 2.4 $ & $93.20 \pm 2.8 $ & $92.73 \pm 1.5$ & $\mathbf{97.44 \pm 0.5}$ & $95.75 \pm 0.3$\\
            high & $64.04 \pm 5.4 $ & $86.37 \pm 1.4 $ & $87.04 \pm 1.4$ & $79.17 \pm 6.0$ & $74.47 \pm 7.4$ & $88.24 \pm 3.3 $ & $91.54 \pm 0.9 $ & $90.81 \pm 1.5$ & $\mathbf{97.08 \pm 0.6}$ & $95.87 \pm 0.3$ \\
            \midrule
            \multicolumn{11}{c}{\texttt{VisDA12}} \\
            \midrule 
            VisDA & $48.63 \pm 1.0$ & $53.72 \pm 0.9$ & $57.40 \pm 1.1$ & $47.56 \pm 0.8$ & $36.21 \pm 1.8$ & $55.62 \pm 1.6$ & $55.33 \pm 0.8$ & $51.88 \pm 1.6$ & $\mathbf{59.24 \pm 0.5}$ & $57.61 \pm 0.3$ \\
            sVisDA & $42.46 \pm 1.4$ & $47.57 \pm 0.9$ & $47.32 \pm 1.4$ & $41.48 \pm 1.6$ & $31.83 \pm 3.0$ & $55.00 \pm 1.9$ & $51.86 \pm 2.0$ & $50.65 \pm 1.5$ & $\mathbf{58.84 \pm 1.0}$ & $55.77 \pm 1.1$ \\
            \midrule
            \multicolumn{11}{c}{\texttt{Office31}} \\\midrule
            sA-D  & $80.71 \pm 0.5$ & $82.39 \pm 0.4$ & $81.76 \pm 0.4$ & $75.98 \pm 1.2$ & $68.64 \pm 2.4$ & $\mathbf{84.54 \pm 1.0}$ & $\mathbf{84.10 \pm 0.8}$ & $81.83 \pm 0.5$ & $\mathbf{84.17 \pm 0.7}$ & $87.74 \pm 0.6$ \\
            sD-W & $89.08 \pm 0.4$ & $88.70 \pm 0.2$ & $88.98 \pm 0.2$ & $88.53 \pm 0.2$ & $88.97 \pm 0.1$ & $91.03 \pm 0.4$ & $90.76 \pm 0.4$ & $88.17 \pm 0.3$ & $\mathbf{94.13 \pm 0.2}$ & $91.31 \pm 0.2$ \\
            sW-A & $58.91 \pm 0.2$ & $58.87 \pm 0.1$ & $59.18 \pm 0.2$ & $60.70 \pm 0.3$ & $60.95 \pm 0.2$ & $63.94 \pm 0.1$ & $63.80 \pm 0.3$ & $60.25 \pm 0.2$ & $\mathbf{69.99 \pm 0.1}$ & $63.92 \pm 0.2$ \\
            sW-D & $95.64 \pm 0.2$ & $97.26 \pm 0.3$ & $97.13 \pm 0.3$ & $95.99 \pm 0.3$ & $95.57 \pm 0.5$ & $97.96 \pm 0.1$ & $\mathbf{98.16 \pm 0.2}$ & $97.53 \pm 0.2$ & $\mathbf{98.47 \pm 0.2}$ & $98.35 \pm 0.0$ \\
            sD-A & $53.41 \pm 0.9$ & $57.45 \pm 0.2$ & $57.81 \pm 0.2$ & $58.24 \pm 0.2$ & $58.61 \pm 0.3$ & $62.12 \pm 0.2$ & $62.13 \pm 0.4$ & $60.03 \pm 0.2$ & $\mathbf{65.00 \pm 0.5}$ & $62.57 \pm 0.3$ \\
            sA-W & $69.23 \pm 0.5$ & $72.09 \pm 0.5$ & $72.60 \pm 0.3$ & $65.94 \pm 0.9$ & $61.64 \pm 7.2$ & $81.60 \pm 0.5$ & $81.05 \pm 0.7$ & $75.84 \pm 0.7$ & $\mathbf{83.91 \pm 0.5}$ & $82.51 \pm 0.5$ \\ 
            \midrule
            \multicolumn{11}{c}{\texttt{OfficeHome}} \\\midrule
            sA-C & $44.44 \pm 0.3$ & $ 46.08 \pm 0.3$ & $41.74 \pm 1.7$ & $40.90 \pm 0.8$ & $39.22 \pm 1.1$ & $ 47.19 \pm 0.3 $ & $ 46.94 \pm 0.2 $ & $ 45.29 \pm 0.1$ & $\mathbf{48.43 \pm 0.2}$ & $ 48.09 \pm 0.2$ \\
            sA-P & $58.96 \pm 0.3 $ & $ 59.96 \pm 0.2 $ & $54.67 \pm 1.8$ & $ 52.18 \pm 2.3$ & $46.29 \pm 1.4$ & $ 62.17 \pm 0.2 $ & $ 61.97 \pm 0.2 $ & $ 59.46 \pm 0.3$ & $\mathbf{69.52 \pm 0.4}$ & $63.59 \pm 0.2$ \\
            sA-R & $67.10 \pm 0.2$ & $67.42 \pm 0.2$ & $65.40 \pm 0.6$ & $62.52 \pm 1.7$ & $60.51 \pm 1.9$ & $68.66 \pm 0.3$ & $68.62 \pm 0.3$ & $67.76 \pm 0.2$ & $\mathbf{73.29 \pm 0.3}$ & $69.85 \pm 0.1$ \\
            sC-A & $35.54 \pm 2.3$ & $35.47 \pm 1.7$ & $37.34 \pm 2.0$ & $36.81 \pm 1.5$ & $33.15 \pm 2.3$ & $\mathbf{46.03 \pm 0.2}$ & $\mathbf{46.10 \pm 0.2}$ & $44.18 \pm 0.1$ & $\mathbf{46.47 \pm 0.3}$ & $46.94 \pm 0.2$ \\
            sC-P & $52.48 \pm 2.1$ & $50.56 \pm 0.9$ & $53.53 \pm 0.1$ & $49.96 \pm 1.4$ & $44.67 \pm 1.5$ & $59.82 \pm 0.1$ & $59.82 \pm 0.1$ & $58.67 \pm 0.1$ & $\mathbf{63.37 \pm 0.1}$ & $60.14 \pm 0.1$ \\
            sC-R & $54.99 \pm 1.7$ & $54.22 \pm 0.8$ & $54.69 \pm 0.5$ & $51.34 \pm 2.5$ & $45.16 \pm 3.5$ & $62.69 \pm 0.1$ & $62.41 \pm 0.1$ & $60.74 \pm 0.2$ & $\mathbf{63.12 \pm 0.2}$ & $62.80 \pm 0.2$ \\
            sP-A & $38.10 \pm 4.5$ & $36.36 \pm 3.3$ & $48.24 \pm 0.2$ & $47.24 \pm 0.3$ & $48.20 \pm 0.3$ & $47.78 \pm 0.3$ & $46.10 \pm 0.9$ & $45.68 \pm 0.3$ & $\mathbf{50.84 \pm 0.3}$ & $50.11 \pm 0.4$ \\
            sP-C & $34.16 \pm 4.6$ & $33.00 \pm 1.9$ & $39.10 \pm 0.2$ & $40.36 \pm 0.4$ & $37.41 \pm 0.3$ & $42.41 \pm 0.3$ & $41.92 \pm 0.3$ & $38.22 \pm 0.2$ & $\mathbf{44.15 \pm 0.3}$ & $43.10 \pm 0.2$ \\
            sP-R & $66.28 \pm 3.4$ & $59.19 \pm 0.8$ & $70.01 \pm 0.1$ & $68.78 \pm 0.2$ & $66.61 \pm 0.3$ & $70.00 \pm 0.4$ & $69.37 \pm 0.9$ & $69.43 \pm 0.4$ & $\mathbf{73.95 \pm 0.3}$ & $71.26 \pm 0.4$ \\
            sR-P & $66.67 \pm 5.4$ & $70.97 \pm 0.6$ & $73.47 \pm 0.3$ & $72.66 \pm 0.7$ & $71.76 \pm 0.7$ & $72.62 \pm 0.9$ & $72.72 \pm 1.1$ & $72.90 \pm 0.7$ & $\mathbf{75.58 \pm 0.6}$ & $74.17 \pm 0.7$ \\
            sR-A & $48.59 \pm 6.7$ & $51.90 \pm 1.1$ & $\mathbf{56.97 \pm 0.3}$ & $\mathbf{57.02 \pm 0.5}$ & $55.38 \pm 1.1$	& $54.02 \pm 0.7$ & $53.37 \pm 1.3$ & $53.44 \pm 0.6$ & $\mathbf{56.28 \pm 0.5}$ & $57.68 \pm 0.6$ \\
            sR-C & $39.36 \pm 2.5$ & $45.33 \pm 0.8$ & $46.47 \pm 0.4$ & $47.11 \pm 0.5$ & $45.38 \pm 1.3$ & $45.81 \pm 1.2$ & $45.30 \pm 1.2$ & $42.66 \pm 1.0$ & $\mathbf{49.07 \pm 0.9}$ & $47.86 \pm 0.3$ \\
            \bottomrule
        \end{tabular}
    }
    \label{table:image_results_full}
    \renewcommand{\arraystretch}{1}
    \vspace{-.5em}
\end{table}
\FloatBarrier

\begin{figure}[h!]
    ~\hfill
	\includegraphics[width=0.34\textwidth,trim={0 0 3.3cm 0},clip]{./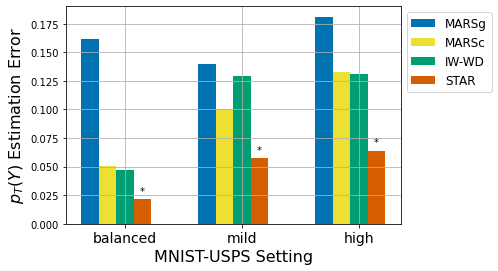}~\hfill~
	\includegraphics[width=0.31\textwidth,trim={1cm  0 3.3cm 0},clip]{./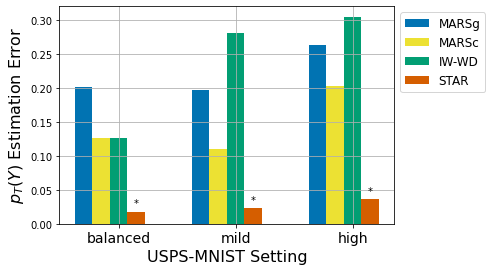}~\hfill~ 
	~\hfill~\includegraphics[width=0.31\textwidth,trim={1cm  0 3.3cm 0},clip]{./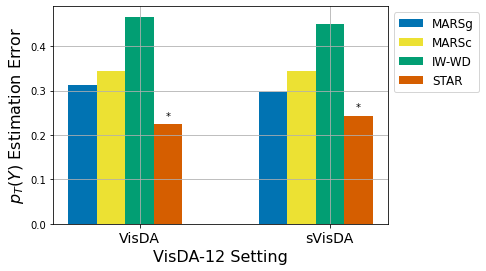}~\hfill~ \\
	\includegraphics[width=0.415\textwidth,trim={0 0 3.3cm 0},clip]{./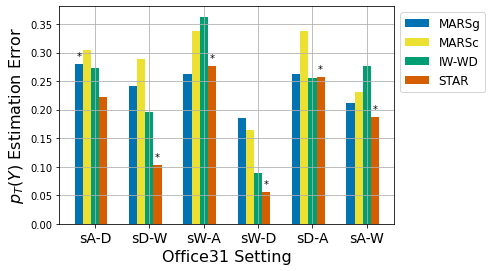}~\hfill~
	\includegraphics[width=0.59\textwidth,trim={1cm 0 0 0},clip]{./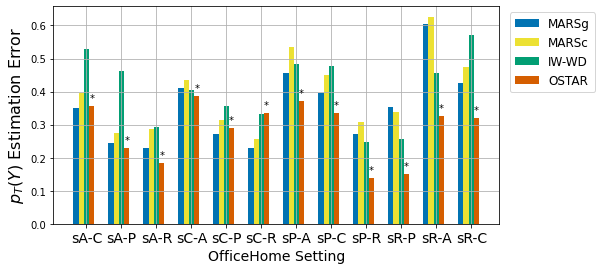}~\hfill~
	\caption{$\ell_1$ estimation error of $\pTY$ ($\downarrow$). The best model for balanced accuracy is indicated with "$\star$".}
	\label{fig:image_results_label}
\end{figure}

\paragraph{Additional ablation studies} We detail the full results for our ablation studies.

\begin{table}[h!]
    \centering
    \caption{Semi-supervised learning for \texttt{OSTAR} and balanced accuracy ($\uparrow$). Best results are in \textbf{bold}.}
    \begin{adjustbox}{max width=0.4\textwidth}
        \begin{tabular}{cccc}
            \toprule
            Setting \textbackslash ~Objective & \eqref{eq:CAL} & + \eqref{eq:SS} & + \eqref{eq:SSg} \\
            \midrule
            \multicolumn{4}{c}{MNIST$\rightarrow$USPS} \\
            \midrule 
            balanced & $95.12 \pm 0.6$ &$96.68 \pm 0.1$ & $\mathbf{96.91 \pm 0.3} $ \\
            mild & $91.77 \pm 1.2$ &$95.39 \pm 1.4$ & $\mathbf{96.18 \pm 1.0} $ \\ 
            high & $88.55 \pm 1.1$ &$95.70 \pm 0.8$ & $\mathbf{96.06 \pm 0.6} $ \\ 
            \midrule
            \multicolumn{4}{c}{USPS$\rightarrow$MNIST} \\
            \midrule
            balanced & $88.19 \pm 1.1$ &$97.16 \pm 0.3$ & $  \mathbf{98.11 \pm 0.2} $ \\
            mild & $88.34 \pm 1.3$ &$96.34 \pm 0.2$ & $  \mathbf{97.44 \pm 0.5} $\\
            high & $84.87 \pm 2.3$ &$95.61 \pm 0.4$ & $  \mathbf{97.08 \pm 0.6} $\\
            \midrule
            \multicolumn{4}{c}{\texttt{VisDA12}} \\
            \midrule
            original & $50.37 \pm 0.6$ & $52.54 \pm 0.3$ & $\mathbf{59.24 \pm 0.5}$ \\
            subsampled & $49.05 \pm 0.9$ & $53.37 \pm 0.6$ & $\mathbf{58.84 \pm 1.0}$ \\
            \midrule
            \multicolumn{4}{c}{\texttt{Office31}} \\
            \midrule 
            sA-D & $81.52 \pm 0.7$ & $83.18 \pm 0.2$ & $\mathbf{84.17 \pm 0.7}$ \\
            sD-W & $89.94 \pm 0.8$ &$89.50 \pm 0.8$ & $\mathbf{94.13 \pm 0.2}$ \\
            sW-A & $59.62 \pm 0.6$ &$60.06 \pm 0.4$  & $\mathbf{69.99 \pm 0.1}$ \\
            sW-D & $96.39 \pm 0.6$ &$97.44 \pm 0.2$ & $\mathbf{98.47 \pm 0.2}$ \\
            sD-A & $54.38 \pm 1.1$ &$56.58 \pm 0.6$ & $\mathbf{65.00 \pm 0.5}$ \\
            sA-W & $75.30 \pm 1.0$ &$81.32 \pm 0.8$ & $\mathbf{83.91 \pm 0.5}$ \\ 
            \bottomrule
        \end{tabular}
    \end{adjustbox}
    \label{table:ablation_ssl_app}
\end{table}

\begin{table}[h!]
    \centering
    \caption{IM for \texttt{MARSc}, \texttt{IW-WD} and \texttt{OSTAR} on balanced accuracy ($\uparrow$). Best results are in \textbf{bold}.}
    \label{table:ssl_di}
    \resizebox{0.65\linewidth}{!}{
        \begin{tabular}{ccccccc}
            \toprule
            Setting \textbackslash ~Model & \texttt{MARSc} & \texttt{MARSc+IM} & \texttt{IW-WD} & \texttt{IW-WD+IM} & \texttt{OSTAR} & \texttt{OSTAR+IM} \\
            \midrule
            \multicolumn{7}{c}{MNIST$\rightarrow$USPS} \\
            \midrule 
            balanced & $96.44 \pm 0.3$ & $\mathbf{97.92 \pm 0.2}$ & $96.10 \pm 0.3$ & $\mathbf{97.91 \pm 0.1}$ & $95.12 \pm 0.6$  & $ 96.91 \pm 0.3 $ \\
            mild & $95.18 \pm 0.9 $ & $95.47 \pm 1.1$ & $94.72 \pm 0.4$ & $95.74 \pm 0.6$ & $91.77 \pm 1.2$  & $\mathbf{96.18 \pm 1.0} $ \\ 
            high & $ 95.07 \pm 0.6 $ &$93.76 \pm 0.5$ & $94.60 \pm 0.8$ & $91.73 \pm 0.6$ & $88.55 \pm 1.1$ & $\mathbf{96.06 \pm 0.6} $ \\ 
            \midrule
            \multicolumn{7}{c}{USPS$\rightarrow$MNIST} \\
            \midrule
            balanced & $ 93.37 \pm 2.5$ & $93.03 \pm 1.9$ & $ 95.68 \pm 0.6$ & $96.17 \pm 0.5$ & $88.19 \pm 1.1$ & $\mathbf{98.11 \pm 0.2} $ \\
            mild & $93.20 \pm 2.8 $ & $94.60 \pm 1.7$ & $92.73 \pm 1.5$  & $92.65 \pm 1.0$ & $88.34 \pm 1.3$ & $\mathbf{97.44 \pm 0.5} $\\
            high & $91.54 \pm 0.9 $ & $90.16 \pm 2.0$ & $90.81 \pm 1.5$ & $91.26 \pm 1.1$ & $84.87 \pm 2.3$ & $\mathbf{97.08 \pm 0.6} $\\
            \midrule
            \multicolumn{7}{c}{\texttt{VisDA12}} \\
            \midrule 
             VisDA & $55.33 \pm 0.8$ & $57.57 \pm 0.8$ & $51.88 \pm 1.6$ & $57.63 \pm 0.1$ & $50.37 \pm 0.6$ & $\mathbf{59.24 \pm 0.5}$ \\
            sVisDA & $51.86 \pm 2.0$ & $57.06 \pm 0.8$ & $50.65 \pm 1.5$ & $57.62 \pm 0.7$ & $49.05 \pm 0.9$ & $\mathbf{58.84 \pm 1.0}$ \\
            \midrule
            \multicolumn{7}{c}{\texttt{Office31}} \\
            \midrule 
            sW-A & $63.80 \pm 0.3$ & $68.12 \pm 0.5$ & $60.25 \pm 0.2$ & $67.42 \pm 0.8$  & $59.62 \pm 0.6$ & $\mathbf{69.99 \pm 0.1}$ \\
            sA-W & $81.05 \pm 0.7$ & $81.83 \pm 1.9$ & $75.84 \pm 0.7$ & $82.34 \pm 1.6$ & $75.30 \pm 1.0$ & $\mathbf{83.91 \pm 0.5}$ \\
            \midrule
            \multicolumn{7}{c}{\texttt{OfficeHome}} \\
            \midrule 
            sR-P & $72.72 \pm 1.1$ & $\mathbf{75.17 \pm 0.6}$ & $72.90 \pm 0.7$ & $74.94 \pm 0.6$ & $71.77 \pm 0.4$ & $\mathbf{75.58 \pm 0.6}$ \\
            sR-A & $53.37 \pm 1.3$ & $54.20 \pm 1.3$ & $53.44 \pm 0.6$ & $54.50 \pm 1.1$  & $55.15 \pm 0.8$ & $\mathbf{56.28 \pm 0.5}$ \\
            sR-C & $45.30 \pm 1.2$ & $48.17 \pm 1.2$ & $42.66 \pm 1.0$ & $47.93 \pm 1.7$ & $43.02 \pm 3.1$ & $\mathbf{49.07 \pm 0.9}$ \\
            \bottomrule
        \end{tabular}}
\end{table}

\begin{table}[h!]
    \centering
    \caption{Best value over training epochs of term \ref{term_A} ($\downarrow$), term \ref{term_C} ($\downarrow$) and term \ref{term_L} ($\downarrow$) without and with IM in \texttt{OSTAR}. Best results are in \textbf{bold}. Terms \ref{term_A} and \ref{term_L} are computed with the primal formulation of OT using the POT package \url{https://pythonot.github.io/}.}
    \resizebox{0.6\linewidth}{!}{
        \begin{tabular}{ccccccc}
            \toprule
              & \multicolumn{4}{c}{Alignment} & \multicolumn{2}{c}{Discriminativity} \\
             \midrule
             & \multicolumn{2}{c}{Term (A)} & \multicolumn{2}{c}{Term (L)} & \multicolumn{2}{c}{Term (C)} \\
            \midrule
            Setting & \texttt{OSTAR} & \texttt{OSTAR+IM} & \texttt{OSTAR} & \texttt{OSTAR+IM} & \texttt{OSTAR} & \texttt{OSTAR+IM}\\
            \midrule
            \multicolumn{7}{c}{MNIST$\rightarrow$USPS} \\
            \midrule
            balanced & $49.83$ & $\textbf{16.56}$ & $1.45$ & $\mathbf{0.18}$ & $2.19 \times 10^{-3}$ & $\mathbf{0.918 \times 10^{-3}}$ \\
            mild & $39.31$ & $\textbf{19.13}$ & $10.78$ & $\mathbf{0.24}$ & $1.65 \times 10^{-3}$ & $\mathbf{0.931 \times 10^{-3}}$  \\
            high & $38.60$ & $\textbf{21.10}$ & $12.78$ & $\mathbf{0.74}$ & $1.76\times 10^{-3}$ & $\mathbf{0.510 \times 10^{-3}}$  \\
            \midrule
            \multicolumn{7}{c}{USPS$\rightarrow$MNIST} \\
            \midrule
            balanced & $235.43$ & $\textbf{86.65}$ & $7.08$ & $\mathbf{0.52}$ & $4.46 \times 10^{-3}$ & $\mathbf{0.495 \times 10^{-3}}$  \\
            mild & $188.67$ & $\textbf{104.66}$ & $20.99$ & $\mathbf{1.05}$ & $3.98 \times 10^{-3}$ & $\mathbf{0.399 \times 10^{-3}}$  \\
            high & $181.64$ & $\textbf{123.83}$ & $21.62$ & $\mathbf{0.82}$ & $4.51\times 10^{-3}$ & $\mathbf{0.616 \times 10^{-3}}$  \\
            \bottomrule
        \end{tabular}}
    \label{table:im_bound}
\end{table}
\begin{table}[h!]
    \centering
    \caption{Detailed analysis of the impact of $\lambda_{OT}$ on balanced accuracy ($\uparrow$). Best results are in \textbf{bold}.}
    \label{table:ablation_ot_app}
    \resizebox{0.8\linewidth}{!}{
    \begin{tabular}{cccccccc}
        \toprule
        \multicolumn{8}{c}{MNIST$\rightarrow$USPS - initialization gain 0.02} \\
        \midrule
        Shift \textbackslash ~$\lambda_{OT}$ & $\lambda_{OT}=0$ & $\lambda_{OT}=10^{-3}$ & $\lambda_{OT}=10^{-2}$ & $\lambda_{OT}=10^{-1}$ & $\lambda_{OT}=1$ & $\lambda_{OT}=10^4$ & \texttt{Source} \\
        \midrule
        balanced & $94.92 \pm 0.6$ & $\mathbf{96.02 \pm 0.2}$ & $95.12 \pm 0.6$ & $89.76 \pm 1.2$ & $91.95 \pm 1.2$ & $87.03 \pm 1.8$ & $86.02 \pm 1.4$ \\
        mild & $88.28 \pm 1.5$ & $88.63 \pm 1.3$ & $\mathbf{91.77 \pm 1.2}$ & $90.17 \pm 1.7$& $88.51 \pm 1.2$ & $88.42 \pm 1.6$ & $89.08 \pm 0.5$ \\ 
        high & $85.24 \pm 1.6$ & $85.38 \pm 1.4$ & $88.55 \pm 1.1$ & $\mathbf{89.10 \pm 1.2}$ & $\mathbf{88.82 \pm 1.1}$ & $86.94 \pm 1.1$ & $86.73 \pm 1.9$ \\
        \bottomrule
    \end{tabular}}
\end{table}
\begin{table}[h!]
    \centering
    \resizebox{0.8\linewidth}{!}{
    \begin{tabular}{ccccccc}
        \toprule 
        \multicolumn{7}{c}{MNIST$\rightarrow$USPS high imbalance} \\
        \midrule
        Gain \textbackslash ~$\lambda_{OT}$ & $\lambda_{OT}=0$ & $\lambda_{OT}=10^{-3}$ & $\lambda_{OT}=10^{-2}$ & $\lambda_{OT}=10^{-1}$ & $\lambda_{OT}=1$  & $\lambda_{OT}=10^4$ \\
        \midrule
        $0.02$ & $85.24 \pm 1.6$ & $85.38 \pm 1.4$ & $88.55 \pm 1.1$ & $\mathbf{89.10 \pm 1.2}$  & $\mathbf{88.82 \pm 1.1}$ & $86.94 \pm 1.1$ \\
        $0.1$ & $84.62 \pm 2.3$ & $85.84 \pm 1.1$ & $\mathbf{88.41 \pm 1.3}$ & $\mathbf{88.79 \pm 1.2}$ & $87.90 \pm 1.2$ & $87.10 \pm 1.0$ \\
        $0.3$ & $83.11 \pm 2.4$ & $84.45 \pm 1.4$ & $89.41 \pm 1.6$ & $\mathbf{91.00 \pm 1.3}$ & $89.65 \pm 0.7$ & $86.23 \pm 1.8$ \\
        \bottomrule
    \end{tabular}}
\end{table}
\FloatBarrier

\section{Details on Optimal Transport}
\label{app:background}
\paragraph{Background}
OT was introduced to find a transportation map minimizing the cost of displacing mass from one configuration to another \citep{Villani2008}. For a comprehensive introduction, we refer to \cite{Peyre2019}. Formally, let $\alpha$ and $\beta$ be absolutely continuous distributions compactly supported in $\mathbb{R}^d$ and $c : \mathbb{R}^d \times \mathbb{R}^d \rightarrow \mathbb{R}$ a cost function. Consider a map $\phi: \mathbb{R}^d \rightarrow \mathbb{R}^d$ that satisfies $\phi_{\#}\alpha = \beta$, i.e. that pushes $\alpha$ to $\beta$. We remind that for a function $f$, $f_{\#} \rho$ is the push-forward measure $f_{\#} \rho(B)=\rho\left(f^{-1}(B)\right)$, for all measurable set $B$. The total transportation cost depends on the contributions of costs for transporting each point $\x$ to $\phi(\x)$ and the Monge OT problem is:
\begin{equation}
    \begin{aligned}
        &\min_\phi \C_{\text{monge}}(\phi)=\int_{\mathbb{R}^d} c(\x, \phi(\x))d\alpha(\x) \\
        &\text{s.t.~} \phi_{\#}\alpha = \beta
    \end{aligned}
    \label{eq:initial_ot_pb}
\end{equation}
$c(\x, \y) = \|\x-\y\|_2^p$ induces the $p$-Wasserstein distance, $\W_p(\alpha, \beta)= \min_{\phi_{\#}\alpha=\beta} \C_{\text{monge}}(\phi)^{1/p}$. When $p=1$, $\W_1$ can be expressed in the dual form $\W_1(\alpha, \beta)= \sup _{\|v\|_{L} \leq 1} \mathbb{E}_{\x \sim \alpha} v(\x)-\mathbb{E}_{\y \sim \beta} v(\y)$ where $\|v\|_L$ is the Lipschitz constant of function $v$. 

\paragraph{Relationship between \eqref{eq:ot_problem} and the Monge OT problem \eqref{eq:initial_ot_pb}}
\eqref{eq:ot_problem} is the extension of \eqref{eq:initial_ot_pb} to the setting where $\pSY \neq \pTY$ with $\pTY$ unknown. This extension aims at matching conditional distributions regardless of how their mass differ and learns a weighting term $\pNY$ for source conditional distributions which addresses label shift settings. When $\pSY = \pTY$, these two formulations are equivalent under \Assref{assumption:cluster_assumption} if we fix $\pNY=\pSY=\pTY$.

\section{Discussion}
\label{app:limitations}
Our four assumptions are required for deriving our theoretical guarantees. All \texttt{GeTarS} papers that propose theoretical guarantees also rely on a set of assumptions, either explicit or implicit. Assumptions \ref{assumption:cluster_assumption} and \ref{assumption:linear_indep} are common to several papers. \Assref{assumption:cluster_assumption} is easily met in practice since it could be forced by training a classifier on the source labels. \Assref{assumption:linear_indep} is said to be met in high dimensions \citep{Redko2019,Garg2020}. \Assref{assumption:cyclical_monotonicity} says that source and target clusters from the same class are globally (there is a sum in the condition) closer one another than clusters from two different classes. This assumption is required to cope with the absence of target labels. Because of the sum, it could be considered as a reasonable hypothesis. It is milder than the hypotheses made in papers offering guarantees similar to ours \citep{Zhang2013,Gong2016}. Assumption \ref{assumption:class_cond_T} is new to this paper. It states that $\phi$ maps a $j$ source conditional to a unique $k$ target conditional i.e. it guarantees that mass of a source conditional will be entirely transferred to a target conditional and will not be split across several target conditionals. This property is key to show that \texttt{OSTAR} matches source conditionals and their corresponding target conditionals, which is otherwise very difficult to guarantee under \texttt{GeTarS} as both target conditionals and proportions are unknown. This assumption has some restrictions, despite being milder than existing assumptions in \texttt{GeTarS}. First, mass from a source conditional might in practise be split across several target conditionals. In practise, our optimization problem in \eqref{eq:CAL} mitigates this problem by learning how to reweight source conditionals to improve alignment. We could additionally apply Laplacian regularization \citep{Ferradans2013,Carreira2014} into our OT map $\phi$ but this was not developed in the paper. Second, it assumes a closed-set UDA problem i.e. label domains are the same $\mathcal{Y}_S=\mathcal{Y}_T$. The setting where $\mathcal{Y}_S \neq \mathcal{Y}_T$ is more realistic in large scale image pretraining and is another interesting follow-up. Note that \texttt{OSTAR} can address empirically open-set UDA ($\mathcal{Y}_T \subset \mathcal{Y}_S$) by simply applying L1 regularization on $\pNY$ in our objective function \eqref{eq:CAL}. This forces sparsity and allows "loosing" mass when mapping a source conditional. It avoids the unwanted negative transfer setting when source clusters, with labels absent on the target, are aligned with target clusters.

\section{Proofs}
\label{app:proof}

\begin{proposition*}[\textbf{\ref{proposition:optimality}}]
    For any encoder $g$ which defines $\Z$ satisfying \Assref{assumption:cluster_assumption}, \ref{assumption:class_cond_T}, \ref{assumption:cyclical_monotonicity}, \ref{assumption:linear_indep}, there is an unique solution $(\phi, \pNY)$ to \eqref{eq:ot_problem} and $\phi_{\#}(p_S(Z|Y))=p_{T}(Z | Y)$ and $\pNY = \pTY$.
\end{proposition*}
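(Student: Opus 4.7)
The strategy is to reduce the continuous optimization over $(\phi,\pNY)$ to a discrete optimization over class-assignments $\sigma:\{1,\ldots,K\}\to\{1,\ldots,K\}$: first use \Assref{assumption:class_cond_T} together with the marginal-matching constraint and \Assref{assumption:linear_indep} to pin down both $\sigma$ and $\pNY$ in terms of $\pTY$, then use \Assref{assumption:cluster_assumption} to decompose the objective cluster-by-cluster and apply \Assref{assumption:cyclical_monotonicity} to force $\sigma=\mathrm{Id}$, and finally invoke Brenier's theorem on each cluster to get uniqueness of $\phi$.

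Concretely, I would start from an arbitrary feasible $(\phi,\pNY)$ and use \Assref{assumption:class_cond_T} to define $\sigma$ by $\phi_{\#}(p_S(Z|Y=k))=p_T(Z|Y=\sigma(k))$. Plugging this into the constraint $p_N^{\phi}(Z)=p_T(Z)$ and regrouping the left-hand side by target class yields
\begin{equation*}
\sum_{j=1}^K \Bigl(\sum_{k:\sigma(k)=j} \bm{p}_N^{Y=k}\Bigr)\, p_T(Z|Y=j) \;=\; \sum_{j=1}^K \bm{p}_T^{Y=j}\, p_T(Z|Y=j).
\end{equation*}
\Assref{assumption:linear_indep} then forces coefficient-wise equality, and since every target class is present (so $\bm{p}_T^{Y=j}>0$ for all $j$), $\sigma$ must be surjective, hence a bijection, and $\bm{p}_N^{Y=k}=\bm{p}_T^{Y=\sigma(k)}$.

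Next, under \Assref{assumption:cluster_assumption} the source supports $\Z_S^{(k)}$ are pairwise disjoint, so the Monge cost factorizes cluster-by-cluster and each summand is lower-bounded by the corresponding squared Wasserstein distance:
\begin{equation*}
\C(\phi) \;\geq\; \sum_{k=1}^K \W_2^2\bigl(p_S(Z|Y=k),\, p_T(Z|Y=\sigma(k))\bigr),
\end{equation*}
with equality achieved by gluing the per-class Brenier maps. Minimizing the right-hand side over permutations and applying \Assref{assumption:cyclical_monotonicity} identifies $\sigma=\mathrm{Id}$ as the minimizer, so any optimizer satisfies $\phi_{\#}(p_S(Z|Y=k))=p_T(Z|Y=k)$ and $\pNY=\pTY$. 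Existence follows by constructing $\phi$ as the gluing of the Brenier maps associated with the identity assignment, and uniqueness follows from Brenier's theorem applied to each per-class subproblem.

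The main obstacle I anticipate is the mismatch between \Assref{assumption:cyclical_monotonicity}, stated in terms of $\W_2$, and the squared cost $\W_2^2$ that actually drives the objective in \eqref{eq:ot_problem}. A clean resolution is either to read that assumption with $\W_2^2$ in place of $\W_2$, or to invoke the stronger pointwise sufficient condition mentioned immediately after its statement, under which $\W_2^2(p_S(Z|Y=k),p_T(Z|Y=k)) \leq \W_2^2(p_S(Z|Y=k),p_T(Z|Y=j))$ for all $j,k$, so that optimality of $\sigma=\mathrm{Id}$ transfers to the squared-distance regime. A secondary subtlety is verifying that each source conditional $p_S(Z|Y=k)$ is absolutely continuous on $\Z$ so Brenier's theorem gives a genuine unique map; this is typically inherited from standard regularity of the encoder and is where the existence and uniqueness of the per-cluster minimizers really comes from.
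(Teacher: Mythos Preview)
Your proposal is correct and follows essentially the same route as the paper: Brenier on each cluster for existence and per-class uniqueness, cyclical monotonicity to force the identity assignment, and linear independence to pin down $\pNY=\pTY$. The one organizational difference is that you invoke \Assref{assumption:linear_indep} earlier, combining it with the marginal constraint to first establish that $\sigma$ is a bijection and that $\bm{p}_N^{Y=k}=\bm{p}_T^{Y=\sigma(k)}$, whereas the paper implicitly treats \eqref{eq:ot_problem} as an assignment problem and only uses linear independence at the end; your ordering is arguably cleaner since it justifies the reduction to permutations rather than assuming it. The $\W_2$ versus $\W_2^2$ mismatch you flag is real and is in fact present in the paper's own proof as well (the paper writes $\C(\phi)=\sum_k \W_2(\cdot,\cdot)$ when the quadratic cost gives $\W_2^2$), so your suggested fix---either read \Assref{assumption:cyclical_monotonicity} with $\W_2^2$ or fall back to the pointwise sufficient condition---is the right way to close that gap in both arguments.
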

\vspace{-1em}
\begin{proof}
    Fixing $\Z$ satisfying \Assref{assumption:cluster_assumption}, \ref{assumption:cyclical_monotonicity} and \ref{assumption:linear_indep}, we first show that there exists a solution $(\phi, \pNY)$ to \eqref{eq:ot_problem}. Following \cite{Brenier1991} as $\Z \subset \mathbb{R}^d$, we can find $K$ unique Monge maps $\{\widehat{\phi}^{(k)}\}_{k=1}^K$ s.t. $\forall k~\widehat{\phi}^{(k)}_{\#}(p_S(Z|Y=k))=p_T(Z|Y=k)$ with respective transport costs $\W_1(p_S(Z|Y=k),p_T(Z|Y=k))$. Let's define $\widehat{\phi}$ as $\forall k~\widehat{\phi}_{|\Z_S^{(k)}}=\widehat{\phi}^{(k)}$ where $\cup_{k=1}^K\Z_S^{(k)}$ is the partition of $\Z_S$ in \Assref{assumption:cluster_assumption}. $(\widehat{\phi}, \pTY)$ satisfies the equality constraint to \eqref{eq:ot_problem}, thus we easily deduce existence. 
    
    Now, let $(\phi, \pNY)$ be a solution to \eqref{eq:ot_problem}, let's show unicity. We first show that $\phi=\widehat{\phi}$. Under \Assref{assumption:class_cond_T}, \eqref{eq:ot_problem} is the Monge formulation of the optimal assignment problem between $\{p_S(Z|Y=k)\}_{k=1}^K$ and $\{p_T(Z|Y=k)\}_{k=1}^K$ with $\mathbf{C}$ the cost matrix defined by $\mathbf{C}_{ij}=\W_2(p_S(Z|Y=i), p_T(Z|Y=j))$. At the optimum, the transport cost is related to the Wasserstein distance between source conditionals and their corresponding target conditionals i.e. $\C(\phi)=\sum_{k=1}^K \W_2(p_S(Z|Y=k), \phi_{\#}(p_{S}(Z|Y=k)))$. Suppose $\exists (i, j), j \neq i$ s.t. $\phi_{\#}(p_{S}(Z|Y=i))=p_T(Z|Y=j)$ and $\phi_{\#}(p_{S}(Z|Y=j))=p_T(Z|Y=i)$ and $\forall k \neq i, j, \phi_{\#}(p_{S}(Z|Y=k))=p_T(Z|Y=k)$. \Assref{assumption:cyclical_monotonicity} implies $\sum_{k=1}^K\W_2(p_S(Z|Y=k), p_T(Z|Y=k)) \leq \sum_{k\neq i,j} \W_2(p_S(Z|Y=k), p_T(Z|Y=k)) + \W_2(p_S(Z|Y=i), p_T(Z|Y=j)) + \W_2(p_S(Z|Y=j), p_T(Z|Y=i))$. Thus $C(\widehat{\phi}, \Z) \leq C(\phi, \Z)$ whereas $\phi$, solution to \eqref{eq:ot_problem}, has minimal transport cost. Thus $\phi=\widehat{\phi}$. 
    
    Now let's show ${\pNY}=\pTY$ under \Assref{assumption:linear_indep}. We inject $\phi_{\#}(p_{S}(Z|Y))=p_{T}(Z|Y)$ into \eqref{eq:ot_problem},
    \begin{gather*}
        \sum_{k=1}^{K} {\bm{p}_{N}^{Y=k}} p_{T}(Z |k)=\sum_{k=1}^{K} \bm{p}_{T}^{Y=k} p_{T}(Z |k) \Leftrightarrow \sum_{k=1}^{K}\left({\bm{p}_{N}^{Y=k}}-\bm{p}_{T}^{Y=k}\right) p_{T}(Z |k)=0 \Leftrightarrow {\pNY}=\pTY
    \end{gather*}
\end{proof}

\begin{theorem*}[\textbf{\ref{th:uda_bounds}}]Given a fixed encoder $g$ defining a latent space $\Z$, two domains $N$ and $T$ satisfying cyclical monotonicity in $\Z$, assuming that we have $\forall k, \bm{p}_{N}^{Y=k} > 0$, then $\forall f_N \in \mathcal{H}$ where $\mathcal{H}$ is a set of $M$-Lipschitz continuous functions over $\Z$, we have
    \begin{equation*}
        \begin{aligned}
            \epsilon_{T}^g(f_N) &\leq \underbrace{\epsilon_N^g(f_N)}_{\mathrm{Classification~} \ref{term_C}} + \underbrace{\dfrac{2 M}{\min_{k=1}^K \bm{p}_{N}^{Y=k}} \W_1\Big(p_{N}(Z), p_{T}(Z)\Big)}_{\mathrm{Alignment~} \ref{term_A}} \\
            +& \underbrace{2 M \Big(1 + \dfrac{1}{\min_{k=1}^K \bm{p}_{N}^{Y=k}}\Big) \W_1\Big(\sum_{k=1}^K \bm{p}_{N}^{Y=k} p_T(Z|Y=k), \sum_{k=1}^K \bm{p}_{T}^{Y=k} p_T(Z|Y=k)\Big)}_{\mathrm{Label~} \ref{term_L}}
        \end{aligned}
        \tag{\ref{eq:bounds}}
    \end{equation*}
\end{theorem*}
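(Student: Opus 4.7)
The plan is to bound $\epsilon_T^g(f_N) - \epsilon_N^g(f_N)$ by inserting an auxiliary intermediate domain $N'$ whose label marginal is $\pNY$ but whose conditionals coincide with the target ones $p_T(Z|Y=k)$, so that the resulting two discrepancies isolate a pure conditional-shift piece and a pure label-shift piece. Setting $\ell_k(z) \triangleq \mathbf{1}[f_N(z) \neq k]$ and $\epsilon_{N'}^g(f_N) \triangleq \sum_{k=1}^K \bm{p}_N^{Y=k}\, \mathbb{E}_{z \sim p_T(Z|Y=k)}[\ell_k(z)]$, I would write $\epsilon_T^g(f_N) \leq \epsilon_N^g(f_N) + |\epsilon_{N'}^g(f_N) - \epsilon_N^g(f_N)| + |\epsilon_T^g(f_N) - \epsilon_{N'}^g(f_N)|$ and attack each term with Kantorovich-Rubinstein duality. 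Throughout, the $M$-Lipschitzness of $f_N \in \mathcal{H}$ is used to make each $\ell_k$ an $M$-Lipschitz function of $z$.

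For the conditional-shift piece, KR duality applied class by class yields $|\epsilon_{N'}^g(f_N) - \epsilon_N^g(f_N)| \leq M \sum_k \bm{p}_N^{Y=k}\, \W_1(p_N(Z|Y=k), p_T(Z|Y=k))$. To replace this per-class sum by the marginal $\W_1(p_N,p_T)$ appearing in the theorem, I would introduce the intermediate measure $\tilde{p}_N(Z) \triangleq \sum_k \bm{p}_N^{Y=k}\, p_T(Z|Y=k)$ and combine the triangle inequality $\W_1(p_N,\tilde{p}_N) \leq \W_1(p_N,p_T) + \W_1(p_T,\tilde{p}_N)$ with the fact that $p_T = \sum_k \bm{p}_T^{Y=k} p_T(Z|Y=k)$, so that $\W_1(p_T,\tilde p_N)$ is exactly the label Wasserstein of the statement. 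Cyclical monotonicity between $N$ and $T$ is then invoked to guarantee that the optimal transport between $p_N$ and $\tilde{p}_N$ assigns class $k$ to class $k$, from which each class-conditional distance can be extracted from the aggregated marginal one at the cost of dividing by $\min_k \bm{p}_N^{Y=k}$.

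For the label-shift piece, I would rewrite $|\epsilon_T^g(f_N) - \epsilon_{N'}^g(f_N)| = |\sum_k (\bm{p}_T^{Y=k} - \bm{p}_N^{Y=k})\,\mathbb{E}_{p_T(Z|Y=k)}[\ell_k]|$ as the integral $|\int \sum_k \ell_k(z)(\bm{p}_T^{Y=k} - \bm{p}_N^{Y=k}) p_T(z|Y=k)\, dz|$ against the signed measure $p_T - \tilde{p}_N$. Using the identity $\sum_k \ell_k(z) = K-1$ to consolidate the family $\{\ell_k\}_k$ into a single $M$-Lipschitz surrogate, a second KR application bounds this quantity by $M\,\W_1(p_T,\tilde p_N)$ up to the same per-class extraction correction, yielding the prefactor $(1+1/\min_k \bm{p}_N^{Y=k})$. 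Summing the two controlled pieces and gathering constants produces the stated inequality.

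The main obstacle is the reverse-convexity step in the conditional-shift bound. Joint convexity of the Wasserstein distance only gives the one-sided estimate $\W_1(p_N,\tilde{p}_N) \leq \sum_k \bm{p}_N^{Y=k}\W_1(p_N(Z|Y=k),p_T(Z|Y=k))$, whereas the proof needs the opposite direction to upper-bound the per-class sum by the marginal $\W_1$. Cyclical monotonicity is the qualitative ingredient that prevents optimal transport between the two marginals from routing mass across class conditionals, but converting that qualitative fact into the quantitative inversion with the explicit penalty $1/\min_k \bm{p}_N^{Y=k}$, rather than some unavoidable geometry-dependent constant, is the delicate step on which the whole bound rests.
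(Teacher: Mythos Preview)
Your decomposition via the hybrid domain $N'$ with label marginal $\pNY$ and target conditionals is exactly the paper's $\widetilde T$, and your treatment of the conditional-shift piece (class-wise Kantorovich--Rubinstein, cyclical monotonicity to pass to the marginal $\W_1(p_N,\tilde p_N)$, then the triangle inequality) matches the paper's argument. One bookkeeping difference: since you retain the weights in $\sum_k \bm{p}_N^{Y=k}\W_1(p_N(\cdot|k),p_T(\cdot|k))$, cyclical monotonicity (Proposition~2 of Rakotomamonjy et al.) makes this \emph{equal} to $\W_1(p_N,\tilde p_N)$, so no division by $\min_k\bm{p}_N^{Y=k}$ is needed at that point. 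The paper instead first drops the weights via $\bm{p}_N^{Y=k}\leq 1$, obtaining the unweighted sum, and only then pays the $1/\min$ factor to re-enter the marginal; the $(1+1/\min_k\bm{p}_N^{Y=k})$ prefactor on the label term arises because the triangle inequality applied to this conditional bound spawns an extra $\W_1(p_T,\tilde p_N)$ with coefficient $2M/\min_k\bm{p}_N^{Y=k}$, which is added to the $2M\,\W_1(p_T,\tilde p_N)$ coming directly from the label piece. It does not come from any ``per-class extraction correction'' applied to the label piece itself, as you suggest.

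The genuine gap is the label-shift step. The identity $\sum_k \ell_k(z)=K-1$ does not let you collapse $\sum_k (\bm{p}_T^{Y=k}-\bm{p}_N^{Y=k})\int \ell_k(z)\,p_T(z|k)\,dz$ into the integral of a single Lipschitz function against the signed measure $p_T-\tilde p_N$: the integrand $\ell_k$ is coupled to its own mixture component $p_T(\cdot|k)$, and writing $\ell_k=1-\mathbf{1}[f_N=k]$ leaves a residual $\sum_k(\bm{p}_T^{Y=k}-\bm{p}_N^{Y=k})\int \mathbf{1}[f_N(z)=k]\,p_T(z|k)\,dz$ that is still $k$-indexed. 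The paper sidesteps this by recognizing the label piece as the risk difference $\epsilon_T^g(f_N)-\epsilon_{\widetilde T}^g(f_N)$ and invoking the Shen (2018) bound $\epsilon_D-\epsilon_{D'}\leq 2M\,\W_1(p_D(Z),p_{D'}(Z))$ as a black box rather than attempting a bare KR argument. As a related caveat, $\ell_k(z)=\mathbf{1}[f_N(z)\neq k]$ is a step function and is not $M$-Lipschitz in $z$ merely because $f_N$ is; the paper inherits the same issue but absorbs it into the cited lemma.
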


\begin{proof}
   We first recall that $\epsilon_D^g(f_N)=\mathbb{E}_{(\z, y) \in p_D(Z,Y)} \mathcal{L}(f_N(\z), y)$ where $\mathcal{L}$ is the 0/1 loss. For conciseness, $\forall \z \in \Z$, $\bm{p_{T}}^{\z|Y}, \bm{p_{N}}^{\z|Y}, \mathcal{L}^{\z,k}$ will refer to the vector of $[p_T(\z|k)]_{k=1}^K$, $[p_{N}(\z|k)]_{k=1}^K$, $[\mathcal{L}(f_N(\z),k)]_{k=1}^K$ respectively. In the following, $\odot$ denotes the element-wise product operator between two vectors of the same size. 
    \begin{align*}
        & \forall f_N, \epsilon_{T}^g(f_N) = \epsilon_{N}^g(f_N)+\epsilon_T^g(f_N)-\epsilon_{N}^g(f_N) \\
        &\leq \epsilon_{N}^g(f_N)+\int_{\Z}\sum_{k=1}^{K}\Big(p_T(\z, k)-p_{N}(\z, k)\Big) \times \mathcal{L}\Big(f_N(\z),k\Big) d\z dy  \\ 
        &\leq \epsilon_{N}^g(f_N)+ \int_{\Z} \sum_{k=1}^{K} \Big[\bm{p}_{T}^{Y=k} p_T(\z|k) - \bm{p}_{N}^{Y=k}p_{N}(\z|k)\Big] \times \mathcal{L}\Big(f_N(\z),k\Big) d\z  \\
        &\leq \epsilon_{N}^g(f_N)+ \int_{\Z} {\pTY}^\intercal \Big(\bm{p_{T}}^{\z|Y} \odot \mathcal{L}^{\z,k}\Big) - {\pNY}^\intercal \Big(\bm{p_{N}}^{\z|Y} \odot \mathcal{L}^{\z,k}\Big)   d\z\\
        &\leq \epsilon_{N}^g(f_N)+ \int_{\Z} {\pTY}^\intercal \Big(\bm{p_{T}}^{\z|Y}\odot \mathcal{L}^{\z,k}\Big) - {\pNY}^\intercal \Big(\bm{p_{T}}^{\z|Y} \odot \mathcal{L}^{\z,k}\Big) + {\pNY}^\intercal \Big(\bm{p_{T}}^{\z|Y}\odot \mathcal{L}^{\z,k}\Big) - {\pNY}^\intercal \Big(\bm{p_{N}}^{\z|Y} \odot \mathcal{L}^{\z,k}\Big) d\z \\
        &\leq \epsilon_{N}^g(f_N)+ \int_{\Z} \Big({\pTY}^\intercal - {\pNY}^\intercal\Big) \Big(\bm{p_{T}}^{\z|Y} \odot \mathcal{L}^{\z,k}\Big) + {\pNY}^\intercal \Big((\bm{p_{T}}^{\z|Y} - \bm{p_{N}}^{\z|Y}) \odot \mathcal{L}^{\z,k}\Big) d\z \\
        &\leq \epsilon_{N}^g(f_N)+ \int_{\Z} \Big({\pTY}^\intercal - {\pNY}^\intercal\Big) \Big(\bm{p_{T}}^{\z|Y} \odot \mathcal{L}^{\z,k}\Big) d\z + \int_{\Z} {\pNY}^\intercal \Big((\bm{p_{T}}^{\z|Y} - \bm{p_{N}}^{\z|Y}) \odot \mathcal{L}^{\z,k}\Big) d\z
    \end{align*}
    We now introduce a preliminary result from \cite{Shen2018}. $\forall f_N \in \mathcal{H}$ $M$-Lipschitz continuous,
    \begin{align*}
        \epsilon_N^g(f_N)-\epsilon_T^g(f_N) \leq 2M \cdot \W_1(p_N^g(Z), p_T^g(Z))
    \end{align*}
    Assuming that $h$ is $M$-Lipschitz continuous we apply this result in the following
    \begin{align*}
        \int_{\Z}\Big({\pTY}^\intercal - {\pNY}^\intercal\Big) \Big(\bm{p_{T}}^{\z|Y} \odot \mathcal{L}^{\z,k}\Big) d\z &= \int_{\Z}\sum_{k=1}^{K} \Big(\bm{p}_{T}^{Y=k}-\bm{p}_{N}^{Y=k}\Big)p_{T}(\z|k) \times \mathcal{L}\Big(f_N(\z),k\Big) d\z \\
        &= \epsilon_{T}^g(f_N) - \epsilon_{\widetilde{T}}^g(f_N) \quad \text{where~} p_{\widetilde{T}}(Z)=\sum_{k=1}^K \bm{p}_{N}^{Y=k} p_T(Z|k)\\
        &\leq 2 M \cdot \W_1(p_{\widetilde{T}}(Z), p_{T}(Z))
    \end{align*}
    \begin{align*}
        \int_{\Z} {\pNY}^\intercal \Big((\bm{p_{T}}^{\z|Y} - \bm{p_{N}}^{\z|Y})\odot \mathcal{L}^{\z,k}\Big) d\z &= \int_{\Z} \sum_{k=1}^{K} \bm{p}_{N}^{Y=k} \Big(p_T(\z|k)-p_{N}(\z|k)\Big) \times \mathcal{L}\Big(f_N(\z),k\Big) d\z \\
        &\leq \sum_{k=1}^{K} \int_{\Z} \Big(p_T(\z|k) - p_{N}(\z|k)\Big) \times \mathcal{L}\Big(f_N(\z),k\Big) d\z \quad {\forall k~ \bm{p}_{N}^{Y=k} \leq 1}\\
        &\leq 2 M \sum_{k=1}^{K} \W_1\Big(p_T(Z|k), p_{N}(Z|k)\Big)
    \end{align*}
    
    Thus, $\forall f_N$ $M$-Lipschitz continuous
    \begin{align*}
        \epsilon_T^g(f_N) \leq \epsilon_N^g(f_N) + 2 M \times \sum_{k=1}^{K} \W_1\Big(p_T(Z|k), p_{N}(Z|k)\Big) + 2 M \times  \W_1\Big(p_{\widetilde{T}}(Z), p_T(Z)\Big)
    \end{align*}
    
    We rewrite the second term to involve directly latent marginals. Proposition 2 in \cite{Rakotomamonjy2020} shows that under cyclical monotonicity, if $\forall k, \bm{p}_{N}^{Y=k} > 0$, 
    $$
    \W_1\Big(\sum_{k=1}^K \bm{p}_{N}^{Y=k} p_T(Z|k), p_N(Z)\Big)=\sum_{k=1}^K \bm{p}_{N}^{Y=k} \W_1\Big(p_N(Z|k), p_T(Z|k)\Big)
    $$
    This allows to write
    \begin{align*}
        \min_{k=1}^K \bm{p}_{N}^{Y=k} &\sum_{k=1}^K \W_1\Big(p_N(Z|k), p_T(Z|k)\Big) \leq \sum_{k=1}^K \bm{p}_{N}^{Y=k} \W_1\Big(p_N(Z|k), p_T(Z|k)\Big) \\
        &= \W_1\Big(\sum_{k=1}^K \bm{p}_{N}^{Y=k} p_T(Z|k), p_N(Z)\Big) = \W_1\Big(p_{\widetilde{T}}(Z), p_N(Z)\Big) \\ 
    \end{align*}
    
    We then use the triangle inequality for the Wasserstein distance $\W_1$
    \begin{align*}
        \epsilon_T^g(f_N) &\leq \epsilon_N^g(f_N) + \dfrac{2M}{\min_{k=1}^K \bm{p}_{N}^{Y=k}} \W_1\Big(p_{\widetilde{T}}(Z), p_{N}(Z)\Big) + 2 M \times \W_1\Big(p_{\widetilde{T}}(Z), p_T(Z)\Big) \\
        &\leq \epsilon_N^g(f_N) + \dfrac{2 M}{\min_{k=1}^K \bm{p}_{N}^{Y=k}} \W_1\Big(p_{N}(Z), p_{T}(Z)\Big) + 2 M (1 + \dfrac{1}{\min_{k=1}^K \bm{p}_{N}^{Y=k}}) \W_1\Big(p_{\widetilde{T}}(Z), p_T(Z)\Big)
    \end{align*}
\end{proof}

\paragraph{Derivation of the reweighted classification loss \ref{term_C} $\epsilon_{N}^g(f_N)$}
Let $\mathcal{L}_{ce}$ be the cross-entropy loss. Given a classifier $h$, feature extractor $g$ and domain $N$, the mapping of domain $S$ by $(\phi, \pNY)$, 
\begin{align*}
    &\epsilon_{N}^g(f_N) = \int_{\Z, \mathcal{Y}}  p_{N}^{\phi}(\z, y) \mathcal{L}_{ce}\Big(f_N(\z),y\Big) d\z dy = \int_{\Z, \mathcal{Y}} \bm{p}_{N}^{Y=y} p_{N}^{\phi}(\z|y) \mathcal{L}_{ce}\Big(f_N(\z),y\Big) d\z dy \\
    &= \int_{\Z, \mathcal{Y}} \dfrac{\bm{p}_{N}^{Y=y}}{\bm{p}_{S}^{Y=y}} \bm{p}_{S}^{Y=y} p_{N}^{\phi}(\z|y) \mathcal{L}_{ce}\Big(f_N(\z),y\Big) d\z dy = \int_{\Z, \mathcal{Y}} \dfrac{\bm{p}_{N}^{Y=y}}{\bm{p}_{S}^{Y=y}} \bm{p}_{S}^{Y=y} \phi_{\#}(p_{S}(\z|y)) \mathcal{L}_{ce}\Big(f_N(\z),y\Big) d\z dy
\end{align*}

\section{Pseudo-code and runtime / complexity analysis}
\label{app:pseudo-code}
We detail in \Algref{alg:pseudo-code} our pseudo-code and in \Algref{alg:pseudocode_w} how we minimize \eqref{eq:CAL} with respect to $(\phi, f_N)$ using the dual form of Wasserstein-1 distance \eqref{eq:wass_emp}. Our method is based on a standard backpropagation strategy with gradient descent and uses gradient penalty \citep{Gulrajani2017}.
\begin{algorithm}[H]
    \caption{Training and inference procedure for \texttt{OSTAR}}
    \label{alg:pseudo-code}
    \hspace*{\algorithmicindent} \textbf{Training}: 
    
    \hspace*{\algorithmicindent} $\widehat{S}=\{\xspi, y_S^{(i)}\}_{i=1}^n$, $\widehat{T}=\{\xtpi\}_{i=1}^m$, $\Z_{\widehat{N}}=\{\phi \circ g(\xspi), y_S^{(i)}\}_{i=1}^n$, $\Z_{\widehat{T}}=\{g(\xtpi)\}_{i=1}^m$
    
    \hspace*{\algorithmicindent} $f_{S}, f_N \in \mathcal{H}$ classifiers; $g$ feature extractor; $\phi$ latent domain-mapping, $v$ critic.
    
    \hspace*{\algorithmicindent} $N_e$: number of epochs, $N_u$: epoch to update $\pNY$, $N_g$: epoch to update $g$
    
    \begin{algorithmic}[1]
        \State Train $f_{S}, g$ on $\widehat{S}$ to minimize source classification loss \Comment{\eqref{eq:C-S}}
        \State Initialize $\pNY=\dfrac{1}{K} \mathbf{1}_K$
        \For{$n_{epoch} \leq N_e$}
            \If{$n_{epoch}\mod N_u=0$} 
                \State Compute $\pNY$ with estimator in \cite{Lipton2018} on ($\Z_{\widehat{N}}$, $\Z_{\widehat{T}}$) \Comment{\eqref{eq:CAL} w.r.t. $\pNY$}
                \State Average $\pNY$ with cumulative moving average
            \EndIf
            \If{$n_{epoch} \leq N_{g}$} 
                {Train $\phi, v, f_N$ with $(\widehat{S}, \widehat{T})$ \Comment{\eqref{eq:CAL} + \eqref{eq:SS}} w.r.t. $\phi, f_N$}
            \Else{}
                Train $\phi, v, f_N, g$ with $(\widehat{S}, \widehat{T})$ \Comment{\eqref{eq:CAL} + \eqref{eq:SSg} w.r.t. $\phi, f_N$}
            \EndIf
        \EndFor
    \end{algorithmic}
    \hspace*{\algorithmicindent} 
    \textbf{Inference:} Score $\xt$ with $f_N \circ g(\xt)$
\end{algorithm}
\begin{algorithm}[H]
    \caption{Minimize \eqref{eq:CAL} w.r.t. $(\phi, f_N)$}
        $\widehat{S}=\{\xspi, y_S^{(i)}\}_{i=1}^n$, $\widehat{T}=\{\xtpi\}_{i=1}^m$, $g$ feature extractor, $\phi$ domain-mapping, $v$ critic, $f_N$ classifier.
        
        Parameters of $\phi, v, f_N$: $\theta_{\phi}, \theta_{v}, \theta_{f_N}$ and learning rates $\alpha_{\phi}, \alpha_v, \alpha_{f_N}$.
        
        $N_{iter}$: batches per epoch, $N_b$: batch size, $N_v$: critic iterations
        
        \begin{algorithmic}[1]
            \For{$n_{iter} < N_{iter}$}
                \State Sample minibatches $\xsB, y_S^B=\{\xspi, y_S^{(i)}\}_{i=1}^{N_b}$, $\xtB=\{\ztpi\}_{i=1}^{N_b}$ from $\widehat{S}, \widehat{T}$
                \State Compute $\zsB=g(\xsB)$, $\znB=\phi \circ g(\xsB)$ and $\ztB = g(\xtB)$
                \State Compute class ratios: $\w_Y = \pNY / \pSY$
                \For{$n_v < N_v$}
                    \State Sample random points ${\zB}^{\prime}$ from the lines between $(\znB, \ztB)$ pairs
                    \State Compute gradient penalty $\mathcal{L}_{grad}$ with $\znB, \ztB, {\zB}^{\prime}$ \citep{Gulrajani2017}
                    \State Compute $\mathcal{L}_{wd}^g=\sum_{i=1}^{N_b} \w_{y_{S}^{(i)}} v(\znpi) - \dfrac{1}{N_b} \sum_{i=1}^{N_b} v(\ztpi)$ \eqref{eq:wass_emp}
                    \State $\theta_{v} \leftarrow \theta_{v}-\alpha_{v} \nabla_{\theta_{v}}\Big[\mathcal{L}_{wd}^g -\mathcal{L}_{grad}\Big]$
                \EndFor
                \State Compute $\mathcal{L}_{OT}^g =  \sum_{k=1}^K \dfrac{1}{\#\{y_S^{(i)}=k\}_{i \in \llbracket1, N_b\rrbracket}} \sum_{y_S^{(i)}=k, ~i \in \llbracket1, N_b\rrbracket} \|\phi(\zspi)- \zspi\|_2^2$
                \State $\theta_{\phi} \leftarrow \theta_{\phi}-\alpha_{\phi} \nabla_{\theta_{\phi}}\Big[\mathcal{L}_{wd}^g + \mathcal{L}_{OT}^g\Big]$
                \State Compute $\mathcal{L}_{c}^g(f_N, N) = \dfrac{1}{N_b} \sum_{i=1}^{N_b} \w_{y_{S}^{(i)}}\mathcal{L}_{ce}(f_N \circ \phi \circ g(\xspi), y_S^{(i)})$
                \State $\theta_{f_N} \leftarrow \theta_{f_N}-\alpha_{f_N} \nabla_{\theta_{f_N}} \mathcal{L}_{c}^g(f_N, N)$
            \EndFor
        \end{algorithmic}
        \label{alg:pseudocode_w}
\end{algorithm}

\paragraph{Complexity / Runtime analysis}
In practise on USPS $\rightarrow$ MNIST, the runtimes in seconds on a NVIDIA Tesla V100 GPU machine are the following: \texttt{DANN}: 22.75s, \texttt{$\text{WD}_{\beta=0}$}: 59.25s, \texttt{MARSc}: 72.87s, \texttt{MARSg}: 2769.06s, \texttt{IW-WD}: 74.17s, \texttt{OSTAR+IM}: 89.72s. We observe that (i) computing Wasserstein distance (\texttt{$\text{WD}_{\beta=0}$}) is slower than computing $\mathcal{H}$-divergence (\texttt{DANN}), (ii) runtimes for domain-invariant \texttt{GeTarS} baselines are slightly higher than for \texttt{$\text{WD}_{\beta=0}$} as proportions are additionally estimated, (iii) domain-invariant \texttt{GeTarS} baselines have similar runtimes apart from \texttt{MARSg} which uses GMM, (iv) our model, \texttt{OSTAR+IM}, has slightly higher runtime than \texttt{GeTarS} baselines other than \texttt{MARSg}. We now provide some further analysis on computational cost and memory in \texttt{OSTAR}. We denote $K$ the number of classes, $d$ the dimension of latent representations, $n$ and $m$ the number of source and target samples.
\begin{itemize}
    \item Memory: Proportion estimation is based on the method in \cite{Lipton2018} and requires storing the confusion matrix $\hat{\mathbf{C}}$ and predictions $\pTYh$ with a memory cost of $O(K^2)$. Encoding is performed by a deep NN $g$ into $\mathbb{R}^d$ and classification by a shallow classifier from $\mathbb{R}^d$ to $\mathbb{R}^K$. Alignment is performed with a ResNet $\phi:\mathbb{R}^d \rightarrow \mathbb{R}^d$ in the latent space which has an order magnitude less parameters than $g$. Globally, memory consumption is mostly defined by the number of parameters in the encoder $g$.
    \item Computational cost comes from: (i) proportion estimation, which depends on $K, n + m$ and is solved once every 5 epochs with small runtime; (ii) alignment between source and target representations, which depends on $d, n + m$ and $N_v$ (the number of critic iterations). This step updates less parameters than domain-invariant methods which align with $g$ instead of $\phi$; this may lead to speedups for large encoders. The transport cost $\mathcal{L}^g_{OT}$ depends on $n, d$ and adds small additional runtime; (iii) classification with $f_N$ using labelled source samples, which depends on $d,K,n$. In a second stage, we improve target discriminativity by updating the encoder $g$ with semi-supervised learning; this depends on $d,K,n+m$.
\end{itemize}

\section{Experimental setup}
\label{app:experimental_setup}

\paragraph{Datasets}
We consider the following UDA problems:
\begin{itemize}
    \item \texttt{Digits} is a synthetic binary adaptation problem. We consider adaptation between MNIST and USPS datasets. We consider a subsampled version of the original datasets with the following number of samples per domain: 10000-2609 for MNIST$\rightarrow$USPS, 5700-20000 for USPS$\rightarrow$MNIST. The feature extractor is learned from scratch.
    \item \texttt{VisDA12} is a 12-class adaptation problem between simulated and real images. We consider a subsampled version of the original problem using 9600 samples per domain and use pre-trained ImageNet ResNet-50 features {\small\url{http://csr.bu.edu/ftp/visda17/clf/}}.
    \item \texttt{Office31} is an object categorization problem with 31 classes. We do not sample the original dataset. There are 3 domains: Amazon (A), DSLR (D) and WebCam (W) and we consider all pairwise source-target domains. We use pre-trained ImageNet ResNet-50 features {\small\url{https://github.com/jindongwang/transferlearning/blob/master/data/dataset.md}}. 
    \item \texttt{OfficeHome} is another object categorization problem with 65 classes. We do not sample the original dataset. There are 4 domains: Art (A), Product (P), Clipart (C), Realworld (R) and we consider all pairwise source-target domains. We use pre-trained ImageNet ResNet-50 features {\small\url{https://github.com/jindongwang/transferlearning/blob/master/data/dataset.md}}. 
\end{itemize}

\paragraph{Imbalance settings} 
We consider different class-ratios between domains to simulate label-shift and denote with a "s" prefix, the subsampled datasets. For \texttt{Digits}, we explicitly provide the class-ratios as \cite{Rakotomamonjy2020} (e.g. for high imbalance, class 2 accounts for the 7\% of target samples while class 4 accounts for 22\% of target samples). For \texttt{Visda12}, \texttt{Office31} and \texttt{OfficeHome}, subsampled datasets only consider a small percentage of source samples for the first half classes as \cite{Combes2020} (e.g. \texttt{Office31} considers 30\% of source samples in classes below 15 and uses all source samples from other classes and all target samples).

\begin{table*}[h!]
	\small
	\caption{Label imbalance settings}
	\resizebox{\linewidth}{!}{
    	{\begin{tabular}{lccc} 
    		\toprule
    		Dataset & Configuration & $\pSY$ & $\pTY$ \\\midrule
    		 \texttt{Digits} & balanced  & \{$\frac{1}{10},\cdots, \frac{1}{10}$\} & \{$\frac{1}{10},\cdots, \frac{1}{10}$\}  \\
    		 & subsampled mild  & \{$\frac{1}{10},\cdots, \frac{1}{10}$\} & $\{0,1,2,3,6\}=0.06, \{4,5\}=0.2, \{7,8,9\}=0.1$ \\
    		 & subsampled high & \{$\frac{1}{10},\cdots, \frac{1}{10}$\} & $\{0,1,2,3,6,7,8,9\}=0.07,\{4,5\}=0.22$
    		\\\midrule
    		\texttt{VisDA12} & original & $\{0-11\}: 100\%$ & $\{0-11\}: 100\%$  \\
    		 & subsampled & $\{0-5\}: 30\%$ $\{5-11\}: 100\%$ & $\{0-11\}: 100\%$ \\\midrule
    		\texttt{Office31} & subsampled & $\{0-15\}: 30\%$ $\{15-31\}: 100\%$ &  $\{0-31\}: 100\%$\\\midrule
    		\texttt{OfficeHome} & subsampled & $\{0-32\}: 30\%$ $\{33-64\}: 100\%$ &  $\{0-64\}: 100\%$\\
    		\bottomrule
    	\end{tabular}}
	}
	\label{tab:dataconfig}
\end{table*}
\FloatBarrier

\paragraph{Hyperparameters}
Domain-invariant methods weight alignment over classification; we tuned the corresponding hyperparameter for $\texttt{WD}_{\beta=0}$ in the range $[10^{-4}, 10^{-2}]$ and used the one that achieves the best performance on other models. We also tuned $\lambda_{OT}$ in problem \eqref{eq:CAL} and fixed it to $10^{-2}$ on \texttt{Digits} and $10^{-5}$ on \texttt{VisDA12}, \texttt{Office31} and \texttt{OfficeHome}. Batch size is $N_{b}=200$ and all models are trained using Adam with learning rate tuned in the range $[10^{-4}, 10^{-3}]$. We initialize NN for classifiers and feature extractors with a normal prior with zero mean and gain 0.02 and $\phi$ with orthogonal initialization with gain 0.02. 

\paragraph{Training procedure} 
We fix $N_e$ the number of epochs to 50 on \texttt{Digits}, 150 on \texttt{VisDA12} and 100 on \texttt{Office31}, \texttt{OfficeHome}; \texttt{OSTAR} requires smaller $N_e$ to converge. Critic iterations are fixed to $N_v=5$ which worked best for all baseline models; for \texttt{OSTAR} higher values performed better. For all models, we initialize $f_S, g$ for 10 epochs with \eqref{eq:C-S}. Then, we perform alignment either through domain-invariance or with a domain-mapping until we reach the total number of epochs. \texttt{GeTarS} models \texttt{IW-WD}, \texttt{MARSc}, \texttt{MARSg}, \texttt{OSTAR} perform reweighting with estimates refreshed every $N_u=2$ epochs in the first 10 alignment epochs, every $N_u=5$ epochs after. \texttt{OSTAR} minimizes \eqref{eq:CAL} + \eqref{eq:SS} for 10 epochs on \texttt{Digits}, \texttt{Office31}, \texttt{OfficeHome} and 5 epochs on \texttt{VisDA12}, then minimizes \eqref{eq:CAL} + \eqref{eq:SSg} for remaining epochs.

\paragraph{Architectures}
For \texttt{Digits}, our feature extractor $g$ is composed of three convolutional layers with respectively 64, 64, 128 filters of size $5 \times 5$ interleaved with batch norm, max-pooling and ReLU. Our classifiers ($f_{S}, f_N$) are three-layered fully-connected networks with 100 units interleaved with batch norm, ReLU. Our discriminators are three-layered NN with 100 units and ReLU activation. For \texttt{VisDA12} and \texttt{Office31}, \texttt{OfficeHome}, we consider pre-trained 2048 features obtained from a ResNet-50 followed by 2 fully-connected networks with ReLU and 100 units for \texttt{VisDA12}, 256 units for \texttt{Office31}, \texttt{OfficeHome}.  Discriminators are 2-layer fully-connected networks with respectively 100/1 units on \texttt{VisDA12}, 256/1 units on \texttt{Office31}, \texttt{OfficeHome} interleaved with ReLU. Classifiers are 2-layer fully-connected networks with 100/$K$ units on \texttt{VisDA12}, single layer fully-connected network with $K$ units on \texttt{Office31}, \texttt{OfficeHome}. $\phi$ is a ResNet with $10$ blocks of two fully-connected layers with ReLU and batch-norm.

\paragraph{Implementation of target proportion estimators}
\texttt{OSTAR} and \texttt{IW-WD} use the confusion based estimator in \cite{Lipton2018} and solve a convex optimization problem ((4) in \cite{Combes2020} and \ref{eq:CAL} w.r.t $\pNY$ for \texttt{OSTAR}) which has an unique solution if the soft confusion matrix $\mathbf{C}$ is of full rank. We implement the same optimization problem using the parallel proximal method from \cite{Pustelnik2011} instead of \texttt{cvxopt}\footnote{\url{http://cvxopt.org/}} used in \cite{Combes2020}. \texttt{MARSc} and \texttt{MARSg} \citep{Rakotomamonjy2020} use linear programming with \texttt{POT}\footnote{\url{https://pythonot.github.io/}} to estimate proportions with optimal assignment between conditional distributions. Target conditionals are obtained with hierarchical clustering or with a Gaussian Mixture Model (GMM) using \texttt{sklearn}\footnote{\url{https://scikit-learn.org/}}. \texttt{MARSg} has some computational overhead due to the GMM.

\end{document}